\newcommand{\quotes}[1]{``#1''}
\newcommand\independent{\protect\mathpalette{\protect\independenT}{\perp}}
\def\independenT#1#2{\mathrel{\rlap{$#1#2$}\mkern2mu{#1#2}}}
\newcommand{\veryshortarrow}[1][3pt]{\mathrel{%
   \hbox{\rule[\dimexpr\fontdimen22\textfont2-.2pt\relax]{#1}{.4pt}}%
   \mkern-4mu\hbox{\usefont{U}{lasy}{m}{n}\symbol{41}}}}
\newcommand{\into}[1]{{}^{\veryshortarrow}\hspace{-1mm} #1}
\newcommand{\out}[1]{{#1}^{\veryshortarrow}}
\newcommand*\@dblLabelI {}
\newcommand*\@dblLabelII {}
\newcommand*\@dblequationAux {}
\def\@dblequationAux #1,#2,%
\def\@dblLabelI{\label{#1}}\def\@dblLabelII{\label{#2}}}
\newcommand*{\doubleequation}[3][]{%
    \par\vskip\abovedisplayskip\noindent
    \if\relax\detokenize{#1}\relax
       \let\@dblLabelI\@empty
       \let\@dblLabelII\@empty
    \else % we assume here that the optional argument
          % has the required shape A,B
       \@dblequationAux #1,%
    \fi
    \makebox[0.5\linewidth-1.5em]{%
     \hspace{\stretch2}%
     \makebox[0pt]{$\displaystyle #2$}%
     \hspace{\stretch1}%
    }%
    \makebox[0.5\linewidth-1.5em]{%
     \hspace{\stretch1}%
     \makebox[0pt]{$\displaystyle #3$}%
     \hspace{\stretch2}%
    }%
    \makebox[3em][r]{(%
  \refstepcounter{equation}\theequation\@dblLabelI, 
  \refstepcounter{equation}\theequation\@dblLabelII)}%
  \par\vskip\belowdisplayskip
}
\definecolor{codegreen}{rgb}{0,0.6,0}
\definecolor{codegray}{rgb}{0.5,0.5,0.5}
\definecolor{codepurple}{rgb}{0.58,0,0.82}
\definecolor{backcolour}{rgb}{0.95,0.95,0.92}
\lstdefinestyle{mystyle}{
    backgroundcolor=\color{backcolour},   
    commentstyle=\color{codegreen},
    keywordstyle=\color{magenta},
    numberstyle=\tiny\color{codegray},
    stringstyle=\color{codepurple},
    basicstyle=\ttfamily\footnotesize,
    breakatwhitespace=false,         
    breaklines=true,                 
    captionpos=b,                    
    keepspaces=true,                 
    numbers=left,                    
    numbersep=5pt,
    showspaces=false,                
    showstringspaces=false,
    showtabs=false,                  
    tabsize=2
}
\newcommand{\cmark}{\ding{51}}
\newcommand{\xmark}{\textcolor{lightgray}{\ding{55}}}
\newcommand{\ccmark}{\textcolor{lightgray}{\ding{51}}}
\newcommand{\xxmark}{\ding{55}}
\newcolumntype{C}{>{\centering\arraybackslash}X}
\newcommand{\CC}[1]{\cellcolor{black!#1}}
\pgfplotsset{width=5cm,compat=1.9}
\theoremstyle{plain}
\newtheorem{theorem}{Theorem}[section]
\newtheorem{lemma}[theorem]{Lemma}
\theoremstyle{definition}
\newtheorem{definition}[theorem]{Definition}
\theoremstyle{remark}
\newtheorem{assum}{Assumption}
\begin{document}
\doparttoc
\faketableofcontents

% If your paper is accepted and the title of your paper is very long,
% the style will print as headings an error message. Use the following
% command to supply a shorter title of your paper so that it can be
% used as headings.
%
%\runningtitle{I use this title instead because the last one was very long}

% If your paper is accepted and the number of authors is large, the
% style will print as headings an error message. Use the following
% command to supply a shorter version of the authors names so that
% they can be used as headings (for example, use only the surnames)
%

\twocolumn[

\aistatstitle{To Impute or not to Impute? Missing Data in Treatment Effect Estimation}

\aistatsauthor{Jeroen Berrevoets \And Fergus Imrie \And  Trent Kyono}
\aistatsaddress{ University of Cambridge \And  UCLA  \And Meta } 
\aistatsauthor{James Jordon \And Mihaela van der Schaar}
\aistatsaddress{Alan Turing Institute \And University of Cambridge\\ Alan Turing Institute}
\runningauthor{Jeroen Berrevoets, Fergus Imrie, Trent Kyono, James Jordon, Mihaela van der Schaar}

]

\begin{abstract}
Missing data is a systemic problem in practical scenarios that causes noise and bias when estimating treatment effects. This makes treatment effect estimation from data with missingness a particularly tricky endeavour. A key reason for this is that standard assumptions on missingness are rendered insufficient due to the presence of an additional variable, treatment, besides the input (e.g. an individual) and the label (e.g. an outcome). The treatment variable introduces additional complexity with respect to {\it why} some variables are missing that is not fully explored by previous work. In our work we introduce {\it mixed confounded missingness} (MCM), a new missingness mechanism where some missingness {\it determines} treatment selection and other missingness {\it is determined by} treatment selection. Given MCM, we show that naively imputing all data leads to poor performing treatment effects models, as the act of imputation effectively {\it removes} information necessary to provide unbiased estimates. However, no imputation at all also leads to biased estimates, as missingness determined by treatment introduces bias in covariates. Our solution is {\it selective} imputation, where we use insights from MCM to inform precisely which variables should be imputed and which should not. We empirically demonstrate how various learners benefit from selective imputation compared to other solutions for missing data. We highlight that our experiments encompass both average treatment effects {\it and} conditional average treatment effects.
\end{abstract}

\section{Introduction}

Treatment effects are arguably the most important estimand in causal inference \citep{pearl2009causality, neyman1923applications, rubin1974estimating}, with a treatment {\it effect} lying at the heart of a causal question, and have been adopted in a wide range of fields such as medicine \citep{obermeyer2016predicting, alaa2021machine,berrevoets2020organite,berrevoets2021learning, bica2021real}, marketing \citep{devriendt2018literature, ascarza2018retention, debaere2019reducing}, and even human resources \citep{rombaut2020effectiveness}. Using causal inference, we try to more explicitly attribute effect to the treatment by carefully disentangling the role of the environment. We define {\it effect} as the difference in outcome when applying the treatment versus not applying the treatment (or any alternative treatment for that matter). 

Literature on inferring (or predicting) treatment effects is largely concerned with handling {\it selection bias}. That is, we identify a possible difference between the treated and non-treated subpopulations of the data, since treatment is rarely distributed uniformly random across the population. For example, cancer patients are assigned different oncology therapies based on their individual disease and patient characteristics. If not accounted for, selection bias leads to biased estimates. As such, many works focus on novel strategies to handle this bias \citep{johansson2016learning, alaa2017, shalit2017estimating, rubin1974estimating,imbens2015causal,hatt2022combining,zhang2022identifiable}.

However, these methods almost exclusively assume that data is complete. From a practitioner's standpoint, this may not always be the case and, in reality, {\it data is often incomplete} \citep{burton2004missing,little2012prevention}. 
%When missingness is not properly accounted for when training these models, their adoption in critical environments, such as medicine, could lead to misguided decisions \citep{EM, little2019statistical}. 
Work on learning from incomplete data has focused on non-causal prediction problems and, in particular, on methods for imputing missing values \citep{MICE1,MissForest,raghunathan2000multivariate,nazabal2020handling,GAIN}.
One strategy adopted in the treatment effects literature is to use imputation methods to infer the missing variables before we train a learner on the data \citep{rubin1978multiple, rubin2004multiple,little2019statistical, kallus2018causal}.

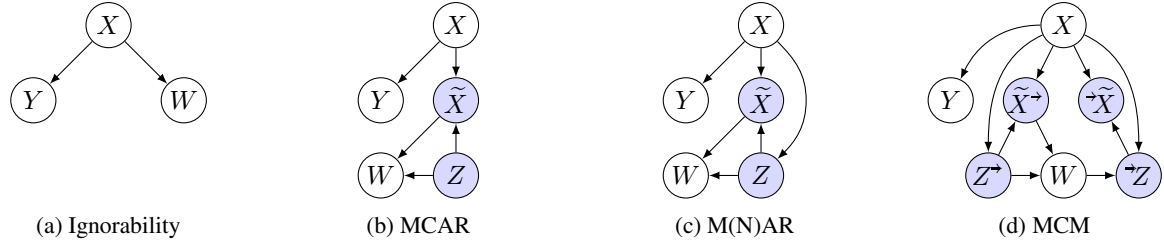
\begin{figure*}[t]
    \begin{subfigure}[b]{.23\textwidth}
        \centering
        \begin{tikzpicture}[
            roundnode/.style={circle, draw=black,  minimum size=6mm, inner sep=0},
            smallnode/.style={circle, draw=black,  minimum size=4mm, inner sep=0, fill=white},
            dummynode/.style={circle, draw=white,  minimum size=6mm, inner sep=0}
        ]
            % NODES
            \node[roundnode] (x) {$X$};d
            \node[roundnode] (w) at (1, -1) {$W$};
            \node[roundnode] (y) at (-1,-1) {$Y$};
            %\node[smallnode] (w_) at (-.55, -1) {$w$};
            \node[dummynode] (dummy) at (0,-2) {};
        
            % ARROWS
            \draw[-latex] (x) -- (w);
            \draw[-latex] (x) -- (y);
            %\draw[-latex] (w_) -- (y);

    \end{tikzpicture}
    \caption{Ignorability}\label{fig:ignorability}
    \end{subfigure}%
    ~
    \begin{subfigure}[b]{0.23\textwidth}
    \centering
    \begin{tikzpicture}[
            roundnode/.style={circle, draw=black,  minimum size=6mm, inner sep=0},
            dummynode/.style={circle, draw=white,  minimum size=6mm, inner sep=0}
        ]
            % NODES
            \node[roundnode] (x)  {$X$};
            \node[roundnode, fill=blue!15] (x_tilde) at (0, -1) {$\widetilde{X}$};
            \node[roundnode] (y) at (-1, -1) {$Y$};
            \node[roundnode, fill=blue!15] (z) at (0, -2) {$Z$};
            \node[roundnode] (w) at (-1, -2) {$W$};
            \node[dummynode] (dummy) at (-1,0) {};

            % ARROWS
            \draw[-latex] (x) -- (x_tilde);
            \draw[-latex] (z) -- (x_tilde);
            \draw[-latex] (x) -- (y);
            \draw[-latex] (x_tilde) -- (w);
            \draw[-latex] (z) -- (w);
        \end{tikzpicture}
        \caption{MCAR}
        \label{fig:mcar}
    \end{subfigure}
    ~
    \begin{subfigure}[b]{0.23\textwidth}
    \centering
    \begin{tikzpicture}[
            roundnode/.style={circle, draw=black,  minimum size=6mm, inner sep=0},
            dummynode/.style={circle, draw=white,  minimum size=6mm, inner sep=0}
        ]
            % NODES
            \node[roundnode] (x) {$X$};
            \node[roundnode, fill=blue!15] (x_tilde) at (0, -1) {$\widetilde{X}$};
            \node[roundnode] (y) at (-1, -1) {$Y$};
            \node[roundnode, fill=blue!15] (z) at (0, -2) {$Z$};
            \node[roundnode] (w) at (-1, -2) {$W$};
            \node[dummynode] (dummy) at (-1,0) {};

            % ARROWS
            \draw[-latex] (x) -- (x_tilde);
            \draw[-latex] (x) -- (y);
            \draw[-latex] (x) to [out=315, in=90] (.6,-1) to [ out=270, in=45] (z);
            \draw[-latex] (z) -- (x_tilde);
            \draw[-latex] (x_tilde) -- (w);
            \draw[-latex] (z) -- (w);
        \end{tikzpicture}
        \caption{M(N)AR}
        \label{fig:mar}
    \end{subfigure}%
    ~
    \begin{subfigure}[b]{0.23\textwidth}
    \centering
    \begin{tikzpicture}[
            roundnode/.style={circle, draw=black,  minimum size=6mm, inner sep=0}
        ]
            % NODES
            \node[roundnode] (x) at (0,0) {$X$};
            \node[roundnode, fill=blue!15] (x_tilde_in) at (.5, -1) {$\into{\widetilde{X}}$};
            \node[roundnode, fill=blue!15] (z_in) at (1, -2) {$\into{Z}$};
            \node[roundnode, fill=blue!15] (x_tilde_out) at (-.5, -1) {$\out{\widetilde{X}}$};
            \node[roundnode, fill=blue!15] (z_out) at (-1, -2) {$\out{Z}$};
            \node[roundnode] (w) at (0, -2) {$W$};
            \node[roundnode] (y) at (-1.5, -1) {$Y$};

            % ARROWS
            
            \draw[-latex] (x) -- (x_tilde_in);
            \draw[-latex] (x) -- (x_tilde_out);
            
            \draw[-latex] (x_tilde_out) -- (w);
            \draw[-latex] (z_out) -- (w);
            
            \draw[-latex] (x) to [out=180, in=65] (y);
            \draw[-latex] (w) -- (z_in);
            \draw[-latex] (z_in) -- (x_tilde_in);
            \draw[-latex] (z_out) -- (x_tilde_out);
            \draw[-latex] (x) to [out=205, in=90] (z_out);
            \draw[-latex] (x) to [out=335, in=90] (z_in);
        \end{tikzpicture}
        \caption{MCM}
        \label{fig:mcm}
    \end{subfigure}
  
    \caption{{\bf (\subref{fig:ignorability}) Ignorability as a graphical model.} From \citet{richardson2013single}, we express ignorability as a DAG. For brevity, we drop the parentheses of $Y(w)$, as well as its accepted ``single world intervention path'' from $w$ to $Y(w)$; {\bf (\subref{fig:mcar}-\subref{fig:mcm}) DGP for missingness mechanisms.} Shaded nodes indicate nodes that relate to missing variables, white nodes relate to treatment effects. In \cref{fig:mcar} and \cref{fig:mar} we illustrate the standard causal extensions to MCAR and M(N)AR, respectively, including the treatment variable $W$. In \cref{fig:mcm} we illustrate MCM. Unique to MCM is to allow treatment to both \textit{cause} missingness (through $\into{\widetilde{X}}$ and $\into{Z}$) and \textit{be caused by} missingness (corresponding to $\out{\widetilde{X}}$ and $\out{Z}$).}
    \label{fig:missingness}
    \rule{\linewidth}{.75pt}
    \vspace{-5mm}
\end{figure*}

Missing data is more challenging in treatment effects compared to prediction problems due to the presence of a treatment variable.
First, %when certain covariates are missing when making a treatment decision, 
the fact that certain variables are missing may contribute to which treatment is administered, contributing to selection bias. For example, being deprived of information, a clinician may opt for a less risky treatment, just to be safe.
Thus, imputation effectively {\it removes information}--- i.e. given that missingness may cause treatment selection, it is much harder to handle selection bias when the information about which variables were missing is removed. 
This observation has led to many \textit{not imputing} missing values and instead considering a missing variable as another value and using it directly \citep{mayer2020doubly, mayer2020missdeepcausal, d2000estimating, rosenbaum1984reducing}.

Conversely, it may also be the case that the treatment decision will impact which variables are measured, i.e. missingness may be {\it caused by the treatment}. We believe this is a ubiquitous but overlooked scenario.
For example, in medicine, many drugs require baseline blood tests to be performed before treatment commences \cite{drugmonitoring}. However, these blood tests typically \textit{would not} be performed if a different treatment decision was made.
Missingness as a result of the treatment choice increases the difference between treatment subpopulations, even when there was no difference to begin with, had all patient covariates been observed. 
%Effectively, these set of covariates would be very much alike had it not been for their missingness. 
Adjusting on these missing variables therefore {\it introduces bias} to the model.
This effect has \textit{not} previously been considered in existing missingness mechanisms. However, in scenarios such as healthcare, this is almost always the case. Thus, to impute or not to impute?

%In fact, our setting takes place whenever the treatment affects which variables are collected. In scenarios such as healthcare, this is almost always the case. Importantly, once in this setting, we show that standard imputation methodologies are not appropriate as they will make the causal effect not identifiable, i.e. causal methods no longer perform causal inference.

\vspace{5pt}\begin{mdframed}[backgroundcolor=black!5]
\textbf{Example: knee pain.} Let us consider a specific example of MCM in practice (we provide an additional example in \cref{app:example}).

Around one in six general practice consultations in England are for arthritis and musculoskeletal issues \cite{Helliwell2014}, of which knee pain is one of the most common complaints.
Treatment plans often consist of pharmaceutical-based pain management, physiotherapy, or both \cite{Alshami2014}.
However, surgery is considered depending on patient characteristics (e.g. young, active) and severity of their condition. 
Patients who are treated with pain relief medication or physiotherapy will often not have additional tests performed (such as radiography or magnetic resonance imaging). However, these tests will almost always be performed before surgery, together with other routine preoperative tests, affecting which patient covariates are collected.

Furthermore, one common surgical intervention is knee arthroscopy \cite{Shah2018}, a gold standard for simultaneous diagnosis and treatment of knee disorders \cite{Rossbach2014}. 
As a surgical procedure allowing doctors to examine the knee, knee arthroscopy provides additional information that is not captured if a different treatment plan is chosen.
While specific to one indication, our two medical examples clearly illustrate the complexity of missingness patterns in real-world observational medical data, motivating and reinforcing the need for, and relevance of, MCM. 
\end{mdframed}

{\bf Contribution.} 
We introduce (and motivate adoption of) a formal description of missingness in data used to estimate treatment effects \citep{neyman1923applications, rubin1974estimating, mohan2013missing,}. In particular, we find that previous attempts--- dating as far back as the 1980s \citep[Appendix B]{rosenbaum1984reducing} ---at formalising missingness in treatment effects allow for inaccurate descriptions of missingness and its impact. We illustrate why these descriptions are insufficient and provide an alternative termed {\it mixed confounded missingness} (MCM). We argue that MCM is a general-purpose missingness mechanism, distinct from well-known missingness mechanisms such as {\it missing (completely) at random} \citep{little2019statistical}, and a refinement of {\it conditional independence of treatment} \citep{rosenbaum1984reducing} that should be adopted to describe missingness when estimating treatment effects.

Based on the insights provided by MCM, we propose a strategy for handling missing data in treatment effects, termed selective imputation. Our approach is theoretically motivated and we provide empirical evidence of how methods benefit from this approach and demonstrate the harm when missingness is not correctly dealt with.

\section{Preliminaries} \label{sec:prelims}
\vspace{-5pt}

Estimating causal effects requires us to answer a {\it counterfactual} question. In particular, when we observe the outcome after applying a treatment on an individual, it is impossible to also observe that individual's outcome under alternative treatment \citep{holland1986statistics}. As a treatment effect is defined as the difference between both outcomes, we are tasked with inferring an estimand which is {\it never observed}, which is crucially different from standard supervised learning.

One can estimate causal effects by conducting randomised controlled trials (RCTs) \citep{fisher1925, neyman1923applications}. However, RCTs are often very expensive, and are sometimes considered unethical in a clinical setting \citep{hellman1991mice,edwards1999ethical}. However, the alternative we consider in our work--- estimating effects from observational data ---comes with its own challenges. In particular, comparing each treatment's subpopulation in an {\it observational} dataset will result in biased estimates. If treatment is not assigned randomly, but instead based on an individual's characteristics, with different outcomes, then these individuals are more represented in each subpopulation as a result, and estimating outcomes become biased. This phenomenon is often termed {\it selection bias}.

{\bf Notation.} Let $X\in\mathcal{X}\subseteq\mathbb{R}^d$ be the covariates of an individual; let the individual be treated with $W\in\{0,1\}$; and let $Y \in \mathcal{Y} \subseteq \mathbb{R}$ be their observed outcome. Practically, $X$ could be a patient with lung cancer; $W=1$ could be chemotherapy (and $W=0$ radiotherapy); and $Y$ their tumour size after treatment. We use a subscript, $X_i$ to indicate the $i$\textsuperscript{th} element in $X$, which means that $X_i \in \mathbb{R}$.

{\bf Assumptions in causal inference.} Estimating unbiased treatment effects from observational data has received a lot of attention in recent years. One of the more popular avenues in the literature is the potential outcomes (POs) framework of causality \citep{neyman1923applications, rubin1974estimating}. We define the PO of a treatment $w\in\{0,1\}$ as $Y(w)$, where $Y(w)$ corresponds to the outcome an individual would have experienced had they been assigned treatment $W=w$. While the standard consistency assumption (see \Cref{assumption:consistency} below) allows us to interpret the observed outcome as the potential outcome of the observed treatment, i.e. $Y = Y(W)$; selection bias makes estimating the counterfactual outcome, $Y(\neg W)$, more involved. 

{\bf Goal.} Ultimately, our goal is to correctly identify the treatment effect, for which we need to account for bias in the data. Countering selection bias is achieved by correctly adjusting for the confounders. To do so, we make the following assumptions, which are widely considered as standard in the PO-framework:
\begin{assum}[Consistency]
\label{assumption:consistency}
The observed outcome $Y_i = Y(W_i) = Y(w)$ if $W_i = w, $ for $w\in \{0,1\}$ and $i=1,2,\dots,N$, \footnote{The well-known \emph{stable unit treatment value assumption} (SUTVA) assumes both no interference and consistency \citep{rubin80comment}. The equation in our consistency assumption also implies no interference.} i.e. outcomes in the data correspond to one of the potential outcomes.
\end{assum}
\begin{assum}[Ignorability] \label{assum:ignorability}
The joint distribution $p(X,W,Y)$ satisfies strong ignorability: $Y(0),Y(1) \independent W | X$, i.e. the potential outcomes are independent of the treatment, conditioned on $X$, implying that there are no {\it additional} (unobserved) confounders beyond the variables in $X$. 
\end{assum}
\begin{assum}[Overlap]\label{assum:overlap}
The distribution $p(X,W,Y)$ satisfies overlap: $\exists~\delta\in (0,1)$ s.t. $ \delta < p(W | X=x) < 1-\delta, \forall x\in \mathcal{X}$, i.e. each individual has a non-zero probability to receive either treatment.
\end{assum}

{\bf Graphical models and causality.} Alternatively to POs, we can express causal relationships as a graphical model. In particular, a causal relationship is depicted as a directed edge in a directed acyclic graph (DAG) \citep{pearl2009causality}, where a parent node is the cause and the child node is the effect. The ignorability assumption in \Cref{assum:ignorability} is sometimes illustrated in such a graphical model \citep{richardson2013single}. Specifically, the ignorability assumption can be expressed as the DAG shown in \cref{fig:ignorability}. Typically the influence of the treatment on the outcome is expressed as a {\it single world intervention graph} (SWIG): \begin{tikzpicture}[
    roundnode/.style={circle, draw=black,  minimum size=5mm, inner sep=0},
    smallnode/.style={circle, draw=black,  minimum size=4mm, inner sep=0, fill=white},
    baseline=-1mm
]
    % NODES
    \node[roundnode] (w) {$W$};
    \node[smallnode] (w_) at (.4, 0) {$w$};
    \node[roundnode] (y) at (1.5, 0) {$Y$};

    % ARROWS
    \draw[-latex] (w_) -- (y);
\end{tikzpicture} \citep{richardson2013single}. We have removed this SWIG-path from our figures in order to focus our discussion on the path(s) between $X$ and $W$.

Note that the set of DAGs (i.e. the Markov equivalence class) that satisfy \Cref{assum:ignorability} encompasses more than just the DAG in \cref{fig:ignorability}. Other DAGs in this equivalence class can just as easily respect \Cref{assumption:consistency,assum:overlap,assum:ignorability}, but they would make no sense. For example, reversing the arrow between $X$ and $Y$ would imply that outcome causes the covariates in the individuals, while still respecting ignorability. Instead, \cref{fig:ignorability} is motivated through logical reasoning, where treatment and outcome are caused by the covariates. Throughout the remainder of this paper, we will build (and extend) heavily on \cref{fig:ignorability}, employing similar logical reasoning.

{\bf Causal estimands.} We now arrive at our two estimands of interest: the {\it average treatment effect} (ATE), and the {\it conditional average treatment effect} (CATE). Given the notation above, we can define each estimand as follows:
\begin{definition}[ATE] \label{eq:ATE}
The ATE is defined as the population-wide difference between a treatment's potential outcomes. Mathematically, we can define the ATE as follows: $\bar\tau(\mathcal{X}) \coloneqq \mathbb{E}_{\mathcal{X}}[Y(1) - Y(0)]$.
\end{definition}
\begin{definition}[CATE] \label{eq:CATE}
The CATE is defined as a conditional difference between a treatment's potential outcomes. Mathematically, we can define the CATE as follows: $\tau(x) \coloneqq \mathbb{E}[Y(1) - Y(0) | X = x]$.
\end{definition}

{\bf Missingness.} In practice, a sample $X$ may be incomplete. For example, a clinician responding to an urgent trauma case may have to select treatment based on incomplete information. In this scenario, the incomplete variables are considered {\it missing}. To learn from these data we could consider {\it completing} this sample through imputation, but missingness in itself may be informative. Perhaps the clinician's decision would have been different if they had had complete information. Typically, we define three {\it mechanisms} to describe how a variable ended up to be missing: a first is {\it missing completely at random} (MCAR), where missingness in one variable is independent of the other variables as well as its own value, a second is {\it missing at random} (MAR), where the missingness in one variable may depend on the other (observed) variables, and the third is {\it missing not at random} (MNAR), where missingness is typically assumed to be caused by variables outside the observed covariates \citep{vanbuuren-MissingData, rubin1976inference, little2019statistical}. Generally, MCAR is attributed to noise in data collection.

We indicate the missing data in $X$ with a variable $Z \in \{1,\star\}^d$, where $Z_i = \star$ if $X_i$ is missing, and $Z_i = 1$ if $X_i$ is observed. Having $Z \independent X, Y(w)$, corresponds with MCAR; and $Z \not\independent X$, but $Z \independent Y(w) | X$, corresponds with M(N)AR. In our paper, we denote the complete (but unobserved) sample as $X$, and the incomplete (but observed) sample as $\widetilde{X} \coloneqq Z \odot X \in \{\star, \mathbb{R}\}^d$ where we define $\odot$ as the element-wise product and we take $X_i \times \star = \star$, i.e. if $X_i$ is unobserved it equals $\star$ in $\widetilde{X}$.

Note that none of the existing missingness mechanisms take the treatment $W$ into account as standard, since they span a broader literature beyond treatment effects. We have illustrated MCAR and M(N)AR as DAGs in \cref{fig:mcar} and \cref{fig:mar}, respectively, where we have included arrows from $\widetilde{X}$ and $Z$ to $W$. This corresponds to the situation where treatment is decided based on what is actually observed; the alternative where $X$ is causing $W$ would lead to a confounded setting, which is commonly assumed not to be the case in the literature. Note that these DAGs respect the conditional independent statements assumed in their respective missingness mechanisms. As we have noted in our introduction, we could either impute a missing value $X_{ij}$ with an estimate thereof, denoted $\dot{X}_{ij}$, such that we can predict a treatment effect from data with imputed samples (i.e. learn $\bar\tau(\dot{X})$ or $\tau(\dot{X})$); or we could predict treatment effects from data with missingness directly (i.e. learn $\bar\tau(\widetilde{X})$ or $\tau(\widetilde{X})$).

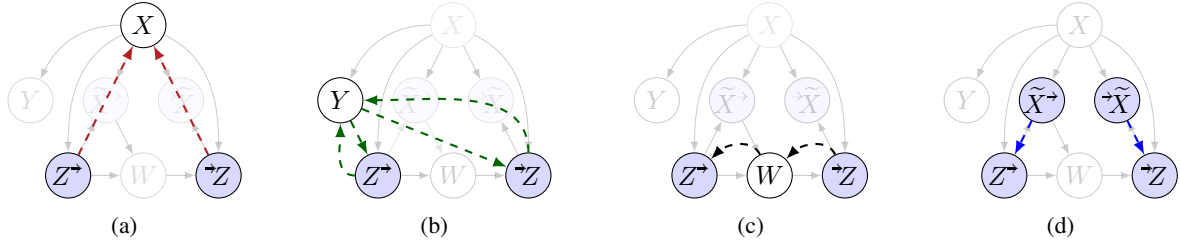
\begin{figure*}[t]
    \begin{subfigure}[b]{0.23\textwidth}
        \centering
        \begin{tikzpicture}[
                roundnode/.style={circle, draw=black,  minimum size=6mm, inner sep=0}
            ]
            % NODES
            \node[roundnode] (x) at (0,0) {$X$};
            \node[roundnode, fill=blue!15, opacity=.10] (x_tilde_in) at (.5, -1) {$\into{\widetilde{X}}$};
            \node[roundnode, fill=blue!15] (z_in) at (1, -2) {$\into{Z}$};
            \node[roundnode, fill=blue!15, opacity=.10] (x_tilde_out) at (-.5, -1) {$\out{\widetilde{X}}$};
            \node[roundnode, fill=blue!15] (z_out) at (-1, -2) {$\out{Z}$};
            \node[roundnode, opacity=.15] (w) at (0, -2) {$W$};
            \node[roundnode, opacity=.15] (y) at (-1.5, -1) {$Y$};

            % ARROWS
            
            \draw[-latex, opacity=.20] (x) -- (x_tilde_in);
            \draw[-latex, opacity=.20] (x) -- (x_tilde_out);
            
            \draw[-latex, opacity=.20] (x_tilde_out) -- (w);
            \draw[-latex, opacity=.20] (z_out) -- (w);
            
            \draw[-latex, opacity=.20] (x) to [out=180, in=65] (y);
            \draw[-latex, opacity=.20] (w) -- (z_in);
            \draw[-latex, opacity=.20] (z_in) -- (x_tilde_in);
            \draw[-latex, opacity=.20] (z_out) -- (x_tilde_out);
            \draw[-latex, opacity=.20] (x) to [out=205, in=90] (z_out);
            \draw[-latex, opacity=.20] (x) to [out=335, in=90] (z_in);
            
            % ARROWS IN QUESTION
            \draw[-latex, color=FireBrick, dashed, thick] (z_out) -- (x);
            \draw[-latex, color=FireBrick, dashed, thick] (z_in) -- (x);
            
        \end{tikzpicture}
        \caption{ }
        \label{fig:exhaustive:z_to_x}
    \end{subfigure}%
    ~
    \begin{subfigure}[b]{0.23\textwidth}
        \centering
        \begin{tikzpicture}[
                roundnode/.style={circle, draw=black,  minimum size=6mm, inner sep=0}
            ]
            % NODES
            \node[roundnode, opacity=.1] (x) at (0,0) {$X$};
            \node[roundnode, fill=blue!15, opacity=.10] (x_tilde_in) at (.5, -1) {$\into{\widetilde{X}}$};
            \node[roundnode, fill=blue!15] (z_in) at (1, -2) {$\into{Z}$};
            \node[roundnode, fill=blue!15, opacity=.10] (x_tilde_out) at (-.5, -1) {$\out{\widetilde{X}}$};
            \node[roundnode, fill=blue!15] (z_out) at (-1, -2) {$\out{Z}$};
            \node[roundnode, opacity=.20] (w) at (0, -2) {$W$};
            \node[roundnode] (y) at (-1.5, -1) {$Y$};

            % ARROWS
            
            \draw[-latex, opacity=.20] (x) -- (x_tilde_in);
            \draw[-latex, opacity=.20] (x) -- (x_tilde_out);
            
            \draw[-latex, opacity=.10] (x_tilde_out) -- (w);
            \draw[-latex, opacity=.20] (z_out) -- (w);
            
            \draw[-latex, opacity=.20] (x) to [out=180, in=65] (y);
            \draw[-latex, opacity=.20] (w) -- (z_in);
            \draw[-latex, opacity=.20] (z_in) -- (x_tilde_in);
            \draw[-latex, opacity=.10] (z_out) -- (x_tilde_out);
            \draw[-latex, opacity=.20] (x) to [out=205, in=90] (z_out);
            \draw[-latex, opacity=.20] (x) to [out=335, in=90] (z_in);

            % ARROWS IN QUESTION
            \draw[-latex, color=DarkGreen, dashed, thick] (z_out) to [out=180, in=270] (y);
            \draw[-latex, color=DarkGreen, dashed, thick] (z_in) to [out=90, in=0] (y);
            \draw[-latex, color=DarkGreen, dashed, thick] (y) -- (z_out);
            \draw[-latex, color=DarkGreen, dashed, thick] (y) -- (z_in);

        \end{tikzpicture}
        \caption{ }
        \label{fig:exhaustive:y_to_z}
    \end{subfigure}
    ~
    \begin{subfigure}[b]{0.23\textwidth}
        \centering
        \begin{tikzpicture}[
                roundnode/.style={circle, draw=black,  minimum size=6mm, inner sep=0}
            ]
            % NODES
            \node[roundnode, opacity=.1] (x) at (0,0) {$X$};
            \node[roundnode, fill=blue!15, opacity=.20] (x_tilde_in) at (.5, -1) {$\into{\widetilde{X}}$};
            \node[roundnode, fill=blue!15] (z_in) at (1, -2) {$\into{Z}$};
            \node[roundnode, fill=blue!15, opacity=.20] (x_tilde_out) at (-.5, -1) {$\out{\widetilde{X}}$};
            \node[roundnode, fill=blue!15] (z_out) at (-1, -2) {$\out{Z}$};
            \node[roundnode] (w) at (0, -2) {$W$};
            \node[roundnode, opacity=.20] (y) at (-1.5, -1) {$Y$};

            % ARROWS
            
            \draw[-latex, opacity=.20] (x) -- (x_tilde_in);
            \draw[-latex, opacity=.20] (x) -- (x_tilde_out);
            
            \draw[-latex, opacity=.20] (x_tilde_out) -- (w);
            \draw[-latex, opacity=.20] (z_out) -- (w);
            
            \draw[-latex, opacity=.20] (x) to [out=180, in=65] (y);
            \draw[-latex, opacity=.20] (w) -- (z_in);
            \draw[-latex, opacity=.20] (z_in) -- (x_tilde_in);
            \draw[-latex, opacity=.20] (z_out) -- (x_tilde_out);
            \draw[-latex, opacity=.20] (x) to [out=205, in=90] (z_out);
            \draw[-latex, opacity=.20] (x) to [out=335, in=90] (z_in);

            % ARROWS IN QUESTION
            \draw[-latex, color=Black, dashed, thick] (z_in) to [out=115, in=45 ] (w);
            \draw[-latex, color=Black, dashed, thick] (w) to [out=115, in=45 ] (z_out);
        \end{tikzpicture}
        \caption{ }
        \label{fig:exhaustive:z_to_w}
    \end{subfigure}%
    ~
    \begin{subfigure}[b]{0.23\textwidth}
        \centering
        \begin{tikzpicture}[
                roundnode/.style={circle, draw=black,  minimum size=6mm, inner sep=0}
            ]
            % NODES
            \node[roundnode, opacity=.20] (x) at (0,0) {$X$};
            \node[roundnode, fill=blue!15] (x_tilde_in) at (.5, -1) {$\into{\widetilde{X}}$};
            \node[roundnode, fill=blue!15] (z_in) at (1, -2) {$\into{Z}$};
            \node[roundnode, fill=blue!15] (x_tilde_out) at (-.5, -1) {$\out{\widetilde{X}}$};
            \node[roundnode, fill=blue!15] (z_out) at (-1, -2) {$\out{Z}$};
            \node[roundnode, opacity=.20] (w) at (0, -2) {$W$};
            \node[roundnode, opacity=.20] (y) at (-1.5, -1) {$Y$};

            % ARROWS
            
            \draw[-latex, opacity=.20] (x) -- (x_tilde_in);
            \draw[-latex, opacity=.20] (x) -- (x_tilde_out);
            
            \draw[-latex, opacity=.20] (x_tilde_out) -- (w);
            \draw[-latex, opacity=.20] (z_out) -- (w);
            
            \draw[-latex, opacity=.20] (x) to [out=180, in=65] (y);
            \draw[-latex, opacity=.20] (w) -- (z_in);
            \draw[-latex, opacity=.20] (z_in) -- (x_tilde_in);
            \draw[-latex, opacity=.20] (z_out) -- (x_tilde_out);
            \draw[-latex, opacity=.20] (x) to [out=205, in=90] (z_out);
            \draw[-latex, opacity=.20] (x) to [out=335, in=90] (z_in);

            % ARROWS IN QUESTION
            \draw[-latex, color=Blue, dashed, thick] (x_tilde_out) -- (z_out);
            \draw[-latex, color=Blue, dashed, thick] (x_tilde_in) -- (z_in);
        \end{tikzpicture}
        \caption{ }
        \label{fig:exhaustive:x_to_z}
    \end{subfigure}%
  
    \caption{{\bf Arrows that are not included in MCM.} Considering each possible direct arrow from, and to, the missingness variables ($\into{Z}$ and $\out{Z}$), should result in 20 arrows with five remaining variables. Excluding the paths that {\it are} included, and the paths {\it across} missingness indicators, we are left with 10 paths that are {\it seemingly} missing from our definition. From \cref{fig:exhaustive:z_to_x,fig:exhaustive:y_to_z,fig:exhaustive:z_to_w,fig:exhaustive:x_to_z}, we depict each missing arrow in function of one variable: $X$, then $Y(w)$, $W$, and then $\out{\widetilde{X}}$, $\into{\widetilde{X}}$, respectively.}
    \label{fig:exhaustive:overview}
    \rule{\linewidth}{.75pt}
    
\end{figure*}

\vspace{-8pt}
\section{Mixed Confounded Missingness (MCM)}\label{sec:MCM}

Some missingness may be {\it caused by} treatment, while other missingness may {\it cause} treatment--- a situation that cannot be modelled by the MCAR nor the M(N)AR missingness mechanisms. This has important consequences as missingness is now a mixture of confounding and non-confounding elements, leading us to term our proposal: {\it mixed confounded missingness} (MCM). We have illustrated MCM as a DAG in \cref{fig:mcm}.

In this section, we explain why there are no other arrows included in MCM, i.e. motivate that it is complete and general. Then, we compare MCM to previous proposals for missingness in CATE.

Like MCAR and M(N)AR, MCM describes the interactions between $Z$ and the remaining variables, $X$, $W$, $Y(w)$, (and $\widetilde{X}$). However, contrasting MCAR and M(N)AR, MCM specifically models various possible interactions with missingness by splitting $Z$ into two distinct factors, $\out{Z}$ and $\into{Z}$. The former captures missingness causing treatment, and the latter missingness caused by treatment. Acknowledging the possibility of missingness that causes or is caused by treatment, leads to the following assumption:
\begin{assum}[Missingness factors in MCM]\label{assum:mcm}
We assume there exists a partition of $Z = \{\out{Z}, \into{Z}\}$ s.t. $\out{Z} \independent \into{Z} | W, X$, further implying that $\out{\widetilde{X}} \independent \into{\widetilde{X}} | W, X$ as $\out{Z} \independent \into{\widetilde{X}} | X, W$ and $\out{\widetilde{X}} \independent \into{Z} | X, W$, where $\into{\widetilde{X}}$ and $\out{\widetilde{X}}$ are the covariate sets with missingness corresponding to $\into{Z}$ and $\out{Z}$, respectively.
\end{assum}

{\bf MCM is exhaustive.} Being exhaustive is important, as through exhaustiveness we can argue that MCM covers {\it all} possible scenarios that can lead to missing data with treatments. For our discussion, exhaustiveness thus means that each potential arrow is either included in MCM, or its absence in MCM has a compelling argument.

Five variables ($X$, $\into{\widetilde{X}}$, $\out{\widetilde{X}}$, $W$, and $Y$) that are fully connected to $\into{Z}$ and $\out{Z}$ should result in 20 distinct paths (four for each of the five variables). However, \cref{fig:mcm} connects only 6 paths to either $\into{Z}$ or $\out{Z}$, leaving 14 paths unaccounted for. With \cref{assum:mcm}, we can exclude paths across missingness factors (i.e. the paths that directly connect $\out{Z}$ and $\into{\widetilde{X}}$, and those that directly connect $\into{Z}$ to $\out{\widetilde{X}}$), reducing the number of unaccounted for paths to 10.

We will now argue why these 10 paths (in \cref{fig:exhaustive:overview}) are not included in MCM. First, we discuss \cref{fig:exhaustive:z_to_x}. Having
\begin{tikzpicture}[
    roundnode/.style={circle, draw=black,  minimum size=5mm, inner sep=0},
    baseline=-1mm
]
    % NODES
    \node[roundnode, fill=blue!15] (z_out) at (-1, 0) {$\out{Z}$};
    \node[roundnode] (x) {$X$};
    \node[roundnode, fill=blue!15] (z_in) at (1, 0) {$\into{Z}$};

    % ARROWS
    \draw[-latex, dashed, color=FireBrick] (z_out) -- (x);
    \draw[-latex, dashed, color=FireBrick] (z_in) -- (x);
\end{tikzpicture} allows covariates to change {\it depending on} what other variables are missing, implying that $X$ is dependent on whatever dataset an individual is included in. Having this dependence would imply that $X$, representing the fully observed-- {\it true} --set of covariates, would be different across datasets, despite representing the same individual. Furthermore, including the paths in \cref{fig:exhaustive:z_to_x} introduces a cycle: 
\begin{tikzpicture}[
    roundnode/.style={circle, draw=black,  minimum size=5mm, inner sep=0},
    baseline=-1mm
]
    % NODES
    \node[roundnode, fill=blue!15] (z_in) at (1.2, 0) {$\into{Z}$};
    \node[roundnode] (x) {$X$};

    % ARROWS
    \draw[-latex, dashed, color=FireBrick, looseness=.9] (z_in) to [out=202, in=338] (x);
    \draw[-latex, looseness=.9] (x) to [out=22, in=158] (z_in);
\end{tikzpicture} (and similarly for $\out{Z}$), which violates the DAG structure.

In \cref{fig:exhaustive:y_to_z} we immediately observe cycles, meaning that only two of the four presented arrows can exist simultaneously. Let us first consider the paths where $Y(w)$ is causing missingness, i.e. 
\begin{tikzpicture}[
    roundnode/.style={circle, draw=black,  minimum size=5mm, inner sep=0},
    baseline=-1mm
]
    % NODES
    \node[roundnode, fill=blue!15] (z_out) at (-1, 0) {$\out{Z}$};
    \node[roundnode] (y) {$Y$};
    \node[roundnode, fill=blue!15] (z_in) at (1, 0) {$\into{Z}$};

    % ARROWS
    \draw[latex-, dashed, color=DarkGreen] (z_out) -- (y);
    \draw[latex-, dashed, color=DarkGreen] (z_in) -- (y);
\end{tikzpicture}. In the potential outcomes setting, $Y(w)$ is topologically last in the DAG; take \cref{fig:ignorability}, where inclusion upon of the single world intervention path, 
\begin{tikzpicture}[
    roundnode/.style={circle, draw=black,  minimum size=5mm, inner sep=0},
    smallnode/.style={circle, draw=black,  minimum size=4mm, inner sep=0, fill=white},
    baseline=-1mm
]
    % NODES
    \node[roundnode] (w) {$W$};
    \node[smallnode] (w_) at (.4, 0) {$w$};
    \node[roundnode] (y) at (1.5, 0) {$Y$};

    % ARROWS
    \draw[-latex] (w_) -- (y);
\end{tikzpicture}, we clearly see that no variable is caused by $Y(w)$. Having $Y(w)$ as the final observation makes sense; once we observe the outcome from a treatment, the covariate observations are left untouched. The absence of the remaining two arrows,
\begin{tikzpicture}[
    roundnode/.style={circle, draw=black,  minimum size=5mm, inner sep=0},
    baseline=-1mm
]
    % NODES
    \node[roundnode, fill=blue!15] (z_out) at (-1, 0) {$\out{Z}$};
    \node[roundnode] (y) {$Y$};
    \node[roundnode, fill=blue!15] (z_in) at (1, 0) {$\into{Z}$};

    % ARROWS
    \draw[-latex, dashed, color=DarkGreen] (z_out) -- (y);
    \draw[-latex, dashed, color=DarkGreen] (z_in) -- (y);
\end{tikzpicture}, is similarly argued as the absence of the arrows in \cref{fig:exhaustive:z_to_x}. Namely, if $Z$ were to directly influence outcome, then an individual represented in two different datasets--- with different missing variables ---would have conflicting outcomes. Clearly, one person can only have one outcome \citep{holland1986statistics}, i.e. the arrows in \cref{fig:exhaustive:y_to_z} cannot exist.

Next, \cref{fig:exhaustive:z_to_w}, illustrating the existing arrows, {\it reversed}. Besides these arrows resulting in cycles--- specifically,  
\begin{tikzpicture}[
    roundnode/.style={circle, draw=black,  minimum size=5mm, inner sep=0},
    baseline=-1mm
]
    % NODES
    \node[roundnode, fill=blue!15] (z_out) at (-1, 0) {$\out{Z}$};
    \node[roundnode] (w) {$W$};
    \node[roundnode, fill=blue!15] (z_in) at (1, 0) {$\into{Z}$};

    % ARROWS
    \draw[-latex, dashed, color=Black] (w) to [out=158, in=22] (z_out);
    \draw[-latex] (z_out) to [out=338, in=202] (w);
    \draw[-latex, dashed, color=Black] (z_in) to [out=158, in=22] (w);
    \draw[-latex] (w) to [out=338, in=202] (z_in);
\end{tikzpicture} ---they also violate our definition for $\into{Z}$ and $\out{Z}$. In particular, the {\it reason} for these distinct factors is precisely their respective directed paths from and to $W$.

Lastly, we consider \cref{fig:exhaustive:x_to_z} depicting the reverse arrows from $\into{\widetilde{X}}$ to $\into{Z}$, and similarly from $\out{\widetilde{X}}$ to $\out{Z}$. Here too, besides the obvious cycles, 
\begin{tikzpicture}[
    roundnode/.style={circle, draw=black,  minimum size=5mm, inner sep=0},
    baseline=-1mm
]
    % NODES
    \node[roundnode, fill=blue!15] (z_in) at (-1, 0) {$\into{Z}$};
    \node[roundnode] (x) {$\into{\widetilde{X}}$};

    % ARROWS
    \draw[-latex, dashed, color=Blue] (x) to [out=158, in=22] (z_in);
    \draw[-latex] (z_in) to [out=338, in=202] (x);
\end{tikzpicture} and similarly for $\out{\widetilde{X}}$ and $\out{Z}$, $\widetilde{X}$ is deterministically defined an element-wise product between $Z$ and $X$. This relationship is unambiguous. If indeed $\into{Z}$ and $\out{Z}$ are caused by the fully observed covariates $X$, it seems almost silly to consider them to be also caused by the partially observed covariates $\widetilde{X}$, which represents almost {\it exactly} the same entity. In fact, their only difference is completely captured in $Z$. With this, we can safely remove the arrows in \cref{fig:exhaustive:overview}, resulting in MCM (with a total of 6 arrows in $Z$) being exhaustive.

{\bf Missing data and existing assumptions.} Having argued each arrow in \cref{fig:mcm}, we will now relate MCM to previous descriptions of missingness in the treatment effects setting. Introduced in \citet{rosenbaum1984reducing} and further investigated in \citet{mattei2009estimating, mayer2020doubly} and \citet{blake2020estimating}, we find that typically the ignorability assumption in the treatment effects literature (that is, \cref{assum:ignorability}) is extended to the setting with missingness by simply replacing the condition in \cref{assum:ignorability} by,
\begin{equation} \label{eq:CATE:assum:UDM}
    Y(0), Y(1) \independent W | \tilde{X}, Z,
\end{equation}
which translates to {\it ``unconfoundedness despite missingness''}\footnote{Note that, we do not {\it have to} explicitly include $Z$ in the condition (see for example \citet{mayer2020doubly}) as $\widetilde{X}$ is related in a deterministic way to $Z$. However, we chose to include $Z$ in these conditions, as they are related to the DAG presented in \cref{fig:mcm}, where we have also included $Z$. Later in our paper, we will remove $Z$ from our DAGs and subsequently our conditions, in order to focus on the relationships with $\widetilde{X}$ directly.}. To verify \cref{eq:CATE:assum:UDM}, typically the above unconfoundedness assumption is combined with one of the two assumptions below:
\begin{align}
    \text{CIT:}&\quad W \independent X | \tilde{X}, Z \label{eq:CIT}\\
    \quad {\bf or} \nonumber\\
    \text{CIO:}&\quad Y(0), Y(1) \independent X | \tilde{X}, Z, \label{eq:CIO}
\end{align}
where CIT in \cref{eq:CIT} stands for {\it conditional independence of treatment}, and CIO in \cref{eq:CIO} stands for {\it conditional independence of outcome} \citep{mayer2020doubly}. Essentially, CIT and CIO assume there is no additional information in the fully observed $X$ compared to the observed $\tilde{X}$ and $Z$ to predict treatment and outcome, respectively--- i.e. adjustment for $\widetilde{X}$ (and $Z$) still warrants ignorability.

The assumptions depicted in \cref{eq:CIT,eq:CIO} can be considered a {\it logical consequence} of the assumption in \cref{eq:CATE:assum:UDM}. In particular, for the potential outcomes to be ignorable from treatment, as is assumed through \cref{eq:CATE:assum:UDM}, there cannot exist direct arrows beyond those in $\widetilde{X}$ from $X$ to either $W$ (which is implied by \cref{eq:CIT}), or $Y(w)$ (which is implied by \cref{eq:CIO}). Violating CIT and CIO would mean that there {\it has to be} such a variable (i.e. dimension) in $X$, that is not represented in the covariate space of $\widetilde{X}$, that has a direct link to either $W$, or $Y(w)$, or both. Such a link is in direct conflict with \cref{eq:CATE:assum:UDM}.

At first glance, CIT and CIO may seem acceptable assumptions. However, we identify two major issues concerning these assumptions: (i) while further specifying the consequences of ignorability despite missingness (cfr. \cref{eq:CATE:assum:UDM}), CIT and CIO are too general, allowing too many different (mostly completely unrealistic) missingness mechanisms without violating \cref{eq:CATE:assum:UDM} nor CIT or CIO; and (ii) they do not allow for treatments to cause missingness, which we have argued above is an important consideration in treatment effects. Furthermore, problem (ii) hints at a larger underlying problem regarding missingness in treatment effects: creating an additional source of bias, which we shall discuss in \Cref{sec:imputation}.

Given the variables in \cref{eq:CATE:assum:UDM} and CIT/CIO ($X$, $\widetilde{X}$, $Z$, $Y(w)$), while keeping
\begin{tikzpicture}[
    roundnode/.style={circle, draw=black,  minimum size=5mm, inner sep=0},
    baseline=-1mm
]
    % NODES
    \node[roundnode] (x) {$X$};
    \node[roundnode] (z) at (.8, 0) {$Z$};
    \node[roundnode, fill=blue!15] (x_tilde) at (1.6, 0) {$\widetilde{X}$};

    % ARROWS
    \draw[-latex] (x) -- (z);
    \draw[-latex] (z) -- (x_tilde);
    \draw[-latex] (x) to [out=30, in=150] (x_tilde);
\end{tikzpicture} fixed, we can generate 42 DAGs which all respect these assumptions. We have included these DAGs in \Cref{app:CIT_CIO}. Note that each of these 42 DAGs respects the independence statements in \cref{eq:CATE:assum:UDM} and those in either \cref{eq:CIT} or \cref{eq:CIO}. Evaluating these DAGs using the same criteria as we have for MCM leads to only 1 realistic DAG, which turns out to be a special case of MCM. Specifically, when removing the consideration of $\into{Z}$--- treatment causing missingness ---we arrive at the one DAG that is realistic. Without explicit consideration of treatment choices that influence missingness (i.e. splitting $Z$ in $\out{Z}$ and $\into{Z}$), we learn that MCM fits nicely within CIT (and consequentially \cref{eq:CATE:assum:UDM}). Furthermore, this one DAG corresponds exactly to how we introduced M(N)AR in \cref{fig:mar}.

\section{Selective imputation} \label{sec:imputation}

Having introduced MCM, we now explain why missing data in treatment effects should not be brushed over lightly. We discuss why the two common approaches to handle missing data (recall that typically missing data is either imputed, or kept as is) are not equipped to deal appropriately with missing data that interacts with treatment. Here we offer an alternative to these two approaches.

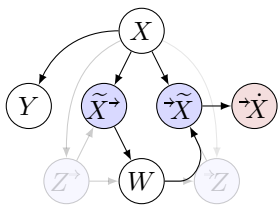
\begin{figure}[h]
    \centering

    \begin{tikzpicture}[
            roundnode/.style={circle, draw=black,  minimum size=6mm, inner sep=0}
        ]
        % NODES
        \node[roundnode] (x) at (0,0) {$X$};
        \node[roundnode, fill=blue!15] (x_tilde_in) at (.5, -1) {$\into{\widetilde{X}}$};
        \node[roundnode, fill=blue!15, opacity=.20] (z_in) at (1, -2) {$\into{Z}$};
        \node[roundnode, fill=blue!15] (x_tilde_out) at (-.5, -1) {$\out{\widetilde{X}}$};
        \node[roundnode, fill=blue!15, opacity=.20] (z_out) at (-1, -2) {$\out{Z}$};
        \node[roundnode] (w) at (0, -2) {$W$};
        \node[roundnode] (y) at (-1.5, -1) {$Y$};
        \node[roundnode, fill=FireBrick!15] (x_hat) at (1.5, -1) {$\into{\dot{X}}$};

        % ARROWS
        
        \draw[-latex] (x) -- (x_tilde_in);
        \draw[-latex] (x) -- (x_tilde_out);
        
        \draw[-latex] (x_tilde_out) -- (w);
        \draw[-latex, opacity=.20] (z_out) -- (w);
        
        \draw[-latex] (x) to [out=180, in=65] (y);
        \draw[-latex, opacity=.20] (w) -- (z_in);
        \draw[-latex, opacity=.20] (z_in) -- (x_tilde_in);
        \draw[-latex, opacity=.20] (z_out) -- (x_tilde_out);
        \draw[-latex, opacity=.20] (x) to [out=205, in=90] (z_out);
        \draw[-latex, opacity=.1] (x) to [out=335, in=90] (z_in);
        
        % NEW ARROWS
        \draw[-latex] (w) to [looseness=1, out=0, in=180] (.5,-2) to [out=0, in=299] (.75,-1.5) to [looseness=1, out=115, in=297.5] (x_tilde_in);
        
        \draw[-latex] (x_tilde_in) -- (x_hat);

    \end{tikzpicture}

    \caption{{\bf Selective imputation.} Above DAG depicts MCM where we included $\out{Z}$ and $\into{Z}$ in the factors $\out{\widetilde{X}}$ and $\into{\widetilde{X}}$, respectively;  clearly showing that $\into{\widetilde{X}}$ is a collider. This structure motivates selective imputation: we should only impute $\into{\widetilde{X}}$, where the imputed variables are depicted as $\into{\dot{X}}$.}
    \label{fig:mcm:full}
    
    \rule{\linewidth}{.75pt}
    
\end{figure}

{\bf Why naive approaches do not work.} First, we have to discuss what can go wrong when naively dealing with missing data. To aid our discussion, we simplified MCM in \cref{fig:mcm:full}, merging $\out{Z}$ into $\out{\widetilde{X}}$, and $\into{Z}$ into $\into{\widetilde{X}}$. Note that this is equivalent to \cref{fig:mcm} as the link between the missingness indicator and the observed covariates is completely deterministic. One could, with complete certainty, derive the missingness indicator from the observed covariates. In fact, the only way of obtaining $Z$ is to do exactly that.

Imputing all data, i.e. the first naive approach, has the objective of recovering $X$ from $\widetilde{X}$, as accurately as possible. Given that $Y(w)$, is a direct function of the fully observed set of covariates, $X$, and not the partially observed set, $\widetilde{X}$, regressing $Y(w)$ on $\dot{X}$ should result in better estimates than $\widetilde{X}$. Contrasting supervised learning, however, the treatment effects literature is not {\it only} concerned with better test-set estimates. In particular, the target in treatment effects literature goes beyond the {\it observed} outcomes in the dataset. To accurately estimate treatment effects, one has to account for selection bias \citep{shalit2017estimating}, which is expressed in the path from $X$ to $W$ (see for example, \cref{fig:ignorability}). In our setting, this is more complicated as there exists no direct path. The path from $X$ to $W$ is {\it mediated} by $\out{\widetilde{X}}$. As explained in \Cref{sec:MCM}, treatment is decided on an individual's covariates, but also the missingness in them. As such, imputing away these missing data will result in information loss with respect to selection bias, making it all the more challenging to counteract the bias when predicting a treatment effect.

Should we then impute no data, i.e. the second naive approach? Not imputing data certainly solves the problem of information loss. However, it introduces another problem. We learn from \Cref{sec:MCM}, that in order to estimate treatment effects from data, one has to assume ignorability despite missingness (see \cref{eq:CATE:assum:UDM}). However, from \cref{fig:mcm:full} we learn that $Y(w) \not\independent W | \out{\widetilde{X}}, \into{\widetilde{X}}$, i.e. the potential outcomes are not conditionally independent from treatment. We find the reason in $\into{\widetilde{X}}$, which--- due to being determined by treatment ---composes a collider structure. A property of a collider structure is that it does not bias treatment selection, but when conditioned upon, {\it it does}. If imputing data causes problems, and not imputing data causes problems, what then should we do?

{\bf Selective imputation.} We argue that one {\it should} impute $\into{\widetilde{X}}$, but leave $\out{\widetilde{X}}$ as is. While solutions may vary across datasets, we argue that {\it selective imputation} may offer a general strategy to handle missing data. Consider again \cref{fig:mcm:full}, where we have included $\into{\dot{X}}$ in the MCM-DAG. From evaluating \cref{fig:mcm:full} with this inclusion we learn that,
\begin{equation} \label{eq:selective_imputation}
    Y(0), Y(1) \independent W | \out{\widetilde{X}}, \into{\dot{X}},
\end{equation}
arriving once more at a conditional independence between $W$ and $Y(w)$. Notice that, the independence statement in \cref{eq:CATE:assum:UDM} conditions on the same covariate information (i.e. all is included from $X$), the only thing changed, is that we have to ``forget'' that some data in $\into{\widetilde{X}}$ may be missing as the missingness associated with $\into{\widetilde{X}}$ is determined by the treatment--- a property we need to account for.

{\bf The role of imputation.} From the above discussion, we learn that missingness is what differentiates treatment-subpopulations in $\into{\widetilde{X}}$. The following theorem shows us what exactly imputation needs to accomplish, in order to consider \cref{eq:selective_imputation} to be true (proof of \cref{thm:indep} can be found in \Cref{app:proof}).

\begin{restatable}{theorem}{thm}
\label{thm:indep}
Suppose the graph structure in \cref{fig:mcm:full}, and $W$ and $\into{\dot{X}}$ are independent, then ignorability defined as $Y(0), Y(1) \independent W | \out{\tilde{X}}, \into{\dot{X}}$ holds.
\end{restatable}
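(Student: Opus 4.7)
The plan is to verify the claimed conditional independence by $d$-separation on the DAG in \cref{fig:mcm:full}, supplemented by the distributional hypothesis $W \independent \into{\dot{X}}$ which is needed to handle the single path that the graphical criterion alone leaves open. Because the potential outcomes $Y(0),Y(1)$ are functions of $X$ (and exogenous noise independent of $W$) in the potential outcomes framework, every $W$-to-$Y(w)$ path in the augmented graph must pass through $X$. Enumerating these, there are exactly two candidates: (a) the confounding chain $W \leftarrow \out{\widetilde{X}} \leftarrow X \to Y(w)$, and (b) the $v$-structure $W \to \into{\widetilde{X}} \leftarrow X \to Y(w)$.

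First I would dispatch path (a): it is a chain through the non-collider $\out{\widetilde{X}}$, which lies in the conditioning set, so $d$-separation blocks it immediately. Path (b) is where the difficulty is concentrated: the collider $\into{\widetilde{X}}$ itself is not in the conditioning set, but its child $\into{\dot{X}}$ is, so a naive reading of the graphical criterion would declare the path open and the desired independence would fail.

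To close path (b) I would invoke the hypothesis $W \independent \into{\dot{X}}$. The plan is to write the joint density according to the Markov factorisation implied by \cref{fig:mcm:full},
\[
    p(Y(w),W,X,\out{\widetilde{X}},\into{\widetilde{X}},\into{\dot{X}}) = p(X)\,p(Y(w)\mid X)\,p(\out{\widetilde{X}}\mid X)\,p(W\mid \out{\widetilde{X}})\,p(\into{\widetilde{X}}\mid X,W)\,p(\into{\dot{X}}\mid \into{\widetilde{X}}),
\]
reduce the target $Y(w) \independent W \mid \out{\widetilde{X}}, \into{\dot{X}}$ to the auxiliary statement $X \independent W \mid \out{\widetilde{X}}, \into{\dot{X}}$ (since $Y(w)\mid X$ is independent of $W$), and then use $W\independent\into{\dot{X}}$ as an algebraic constraint that cancels the collider-opening term in the resulting ratio of conditionals.

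The main obstacle is precisely this last step: the marginal statement $W \independent \into{\dot{X}}$ has to be leveraged to neutralise a genuinely open collider, something $d$-separation does not give for free. I expect the cleanest route is to reinterpret the hypothesis as ``$\into{\dot{X}}$ carries no information about $W$''—since any dependence of $\into{\dot{X}}$ on $W$ could only be transmitted through $\into{\widetilde{X}}$, its statistical independence from $W$ forces the integrated contribution of the collider to vanish. Combined with the facts that $W$'s only parent is $\out{\widetilde{X}}$ (hence $p(W\mid X,\out{\widetilde{X}})=p(W\mid \out{\widetilde{X}})$) and that $Y(w)\mid X$ is independent of $W$, a short conditional-probability calculation should then yield the stated ignorability.
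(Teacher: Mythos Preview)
Your proposal is correct and follows essentially the same route as the paper: the paper also writes the Markov factorisation, marginalises out $\into{\widetilde{X}}$, and uses the hypothesis $W\independent\into{\dot{X}}$ together with the observation that $X$ is not a collider on any $W$--$\into{\dot{X}}$ path to obtain $p(\into{\dot{x}}\mid x,w)=p(\into{\dot{x}}\mid x)$, which is exactly the step you flag as the main obstacle. The only packaging difference is that the paper computes $p(y,w\mid\out{\tilde{x}},\into{\dot{x}})$ directly (invoking an auxiliary lemma that $\out{\widetilde{X}}\independent\into{\dot{X}}\mid X$) rather than first reducing to $X\independent W\mid\out{\widetilde{X}},\into{\dot{X}}$ and then lifting to $Y(w)$ via the graphoid contraction you describe.
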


From \cref{thm:indep} we learn that, in this context, a good imputation strategy should aim to make $\into{\dot{X}}$ independent of the treatment, such that ignorability may hold. The intuition behind this is that through the act of (proper) imputation we effectively {\it balance} $\into{\tilde{X}}$, making the treatment populations indistinguishable from each other \citep{shalit2017estimating}. The better the imputation is, the closer to $X$ the covariates in $\into{\widetilde{X}}$ will become, reducing influence from $W$. If our imputation is of poor quality, a model may recover information about the originally missing variables, i.e. information on treatment is retained as $\into{\dot{X}}$ still correlates with $W$, which allows bias to creep into our models.

\section{Related work}
We discuss two major perspectives in missingness in a causal setting: {\it $m$-graphs}, and ignorability when there is missingness. A more extensive overview is included in \Cref{app:related_work,app:CIT_CIO}.

{\bf {\em m}-graphs.} An $m$-graph is a graphical model \cite{koller2009probabilistic, pearl2009causality} which explicitly includes missingness indicators as variables \cite{mohan2021graphical}. Naturally, this is related as the way in which we describe MCM is through an $m$-graph. Using an $m$-graph, one determines whether or not an effect is identifiable \cite{pearl2009causality}. While this has received some attention in graphical causality \cite{shpitser2015missing, mohan2014graphical, mohan2013missing} which aims to recover identifiability in MAR and MNAR settings or perform structure learning despite missingness \cite{gain2018structure, kyono2021miracle}, $m$-graphs remain relatively underexplored in a potential outcomes setting, as we have. Note that none of the aforementioned works consider partial imputation, specifically to correctly identify a causal effect. A more in-depth discussion can be found in \Cref{app:related_work}.

{\bf Conditional independence.} \Cref{assum:ignorability} is needed to identify a treatment effect. However, through an $m$-graph, we find that \cref{assum:ignorability} may become unrealistic when there is missingness as it requires the fully observed $X$ which we do not have access to. To our knowledge, \citet{mayer2020doubly} was the first to use $m$-graphs depicting the PO setting (although no mention of $m$-graphs were made in their paper), and we know of none who has done so since. As such, \citet{mayer2020doubly} is by far the most related work to ours, and we compare with them extensively in \Cref{app:CIT_CIO}. \citet{mayer2020doubly} propose two $m$-graphs for CIT and CIO (\cref{eq:CIT,eq:CIO}), included in \cref{fig:CIT:perm:original,fig:CIO:perm:original}, respectively. In \Cref{app:CIT_CIO}, we argue that neither of these $m$-graphs can exist in practice. Furthermore, \cref{fig:CIT:perm:original,fig:CIO:perm:original} only take into account missingness that causes treatment, but not the reverse, making CIT and CIO much more restrictive, as discussed above. 

Recent work by \citet{zhao2021adjust} also concerns treatment effects with missingness. While their method makes strict assumptions (such as randomised treatments), they do make an interesting observation: by specifically violating their assumption that treatment does not cause missingness, there seems to be an additional source of bias \citep[Appendix S1]{zhao2021adjust} (despite random treatment assignment). However, they do not offer a remedy as we do, but only acknowledge that this may happen.

\begin{table*}[t]
    \centering
    
    \caption{{\bf Results.} We evaluate using 10 different train and test sets using 10 {\it different} simulations and averaged the results, std is in brackets. {\bf ATE {\em (upper)}.} For each setting we report an MSE between the predicted ATE and the ground truth. Imputing only $\into{\widetilde{X}}$ (``Selective imputation'') consistently performs best across different ATE prediction methodologies and treatments. {\bf CATE {\em (lower)}.} Imputing only $\into{\widetilde{X}}$, while keeping $\out{\widetilde{X}}$ as is (marked as ``Selective'' below), consistently performs best, in terms of PEHE, across learners and treatments. In all cases, {\bf lower is better}, our proposal is \colorbox{black!10}{shaded}. } \label{tab:results:ate}
    \vspace{5pt}
    \begin{tabularx}{\textwidth}{rll | CCC}
        \toprule
        &\multicolumn{2}{c}{\bf Impute} &{\bf T-Learner} & {\bf Doubly Rob.} & {\bf X-Learner} \\
        &\multicolumn{1}{c}{description} & \multicolumn{1}{c}{covariates} &&&\\
        
        \midrule
        \multicolumn{6}{c}{\it Results on ATE-estimation} \\
        \midrule
        
        \multirow{4}{*}{\rotatebox[origin=c]{90}{\bf MSE}}
        & All & $\out{\widetilde{X}} \cup \into{\widetilde{X}}$ 
        & 0.0951 {\footnotesize(.010)} & 0.0651 {\footnotesize(.008)} & 0.0472 {\footnotesize(.009)}  \\
        
        &Nothing & $\emptyset$ 
        & 0.0642 {\footnotesize(.027)} & 0.0902 {\footnotesize(.018)} & 0.0726 {\footnotesize(.024)} \\
        
        &\CC{10}Selective imputation & \CC{10}$\into{\widetilde{X}}$ 
        &\CC{10} {\bf 0.0403} {\footnotesize(.014)} &\CC{10} {\bf 0.0381} {\footnotesize(.009)} &\CC{10} {\bf 0.0309} {\footnotesize(.014)} \\
        
        &Sel. Complement  &$\out{\widetilde{X}}$
        & 0.0931 {\footnotesize(.026)} & 0.0902 {\footnotesize(.019)} & 0.0984 {\footnotesize(.040)} \\

        \midrule
        \multicolumn{6}{c}{\it Results on CATE-estimation} \\
        \midrule
            
        \multirow{4}{*}{\rotatebox[origin=c]{90}{\bf PEHE}}
        & All & $\out{\widetilde{X}} \cup \into{\widetilde{X}}$ 
        &  0.7603 \footnotesize{(0.051)} & 1.3674 \footnotesize{(1.731)} & 0.6149 \footnotesize{(0.063)}  \\
        &Nothing & $\emptyset$
        & 0.6906 \footnotesize{(0.072)} & 0.9409 \footnotesize{(1.943)}  & 0.3027 \footnotesize{(0.085)} \\
        &\CC{10}Selective imputation & \CC{10}$\into{\widetilde{X}}$ 
        &\CC{10} {\bf 0.4605} \footnotesize{(0.045)} &\CC{10} {\bf 0.2042} \footnotesize{(0.224)} & \CC{10}{\bf 0.2116} \footnotesize{(0.032)} \\
        &Sel. Complement  &$\out{\widetilde{X}}$
        & 0.9158 \footnotesize{(0.064)} & 4.3657 \footnotesize{(8.823)} & 0.4912 \footnotesize{(0.109)} \\

        % PEHE | 0
        \midrule
        \multirow{4}{*}{\rotatebox[origin=c]{90}{\bf PEHE\textsubscript{W=0}}}
        & All & $\out{\widetilde{X}} \cup \into{\widetilde{X}}$ 
        & 0.7371 \footnotesize{(0.081)} & 1.2083 \footnotesize{(1.610)}  &  0.6272 \footnotesize{(0.070)} \\
        &Nothing & $\emptyset$ 
        & 0.7015 \footnotesize{(0.100)} &  0.8130 \footnotesize{(1.287)} & 0.2907 \footnotesize{(0.107)}  \\
        &\CC{10}Selective imputation & \CC{10}$\into{\widetilde{X}}$ 
        &\CC{10} {\bf 0.5720} \footnotesize{(0.079)} &\CC{10} {\bf 0.1787} \footnotesize{(0.202)} &\CC{10} {\bf 0.2556} \footnotesize{(0.062)} \\
        &Sel. Complement  &$\out{\widetilde{X}}$
        & 0.9351 \footnotesize{(0.120)} & 4.2306 \footnotesize{(9.056)}  & 0.5198 \footnotesize{(0.156)}\\
        
        % PEHE | 1
        \midrule
        \multirow{4}{*}{\rotatebox[origin=c]{90}{\bf PEHE\textsubscript{W=1}}}
        & All & $ \out{\widetilde{X}} \cup \into{\widetilde{X}}$ 
        & 0.7726 \footnotesize{(0.055)} & 1.4419 \footnotesize{(1.802)}  &  0.6097 \footnotesize{(0.068)}\\
        &Nothing & $\emptyset$ 
        & 0.6881 \footnotesize{(0.090)} & 0.9973 \footnotesize{(2.317)}  & 0.3091 \footnotesize{(0.098)} \\
        & \CC{10}Selective imputation & \CC{10}$\into{\widetilde{X}}$ 
        & \CC{10} {\bf 0.4097} \footnotesize{(0.045)} &\CC{10} {\bf 0.2169} \footnotesize{(0.236)}  &\CC{10} {\bf 0.1915} \footnotesize{(0.036)}  \\
        &Sel. Complement  &$\out{\widetilde{X}}$
        & 0.9183 \footnotesize{(0.075)} & 4.4322 \footnotesize{(8.749)} &  0.4803 \footnotesize{(0.130)}\\
        
        \bottomrule
        
    \end{tabularx}
    
\end{table*}
\section{Experiments}

We turn now to empirically validating that selective imputation, i.e. imputing only {\it some} parts of the covariate space, results in better treatment effect predictions.

{\bf Data.} Common in the treatment effects literature is using synthetic datasets. The reason why we have to rely on synthetic data lies at the core of the problem: in any real-world setting, {\it the counterfactual is unobserved}. As such, with synthetic data we can simulate both potential outcomes, effectively observing the counterfactual for evaluation. 

Additionally, simulations allow us to control the missingness mechanism. This allows us to directly test whether treatment effect models are affected by MCM and benefit from Selective Imputation. Our synthetic setup is described in detail in \Cref{app:data}.

For each experiment, we sample 10 different simulated datasets, from which we sample 10 random train and test sets, for each treatment effects method. As our finding holds both for ATE as well as CATE (in finite settings \citep{alaa2018}), we test on both scenarios. Each simulated dataset contains 10k samples. The datasets span 20 dimensions (with factors of equal size), and a missingness rate of 0.3. We have performed multiple ablations on these values in \Cref{app:experiments}. In \Cref{app:experiments} we have also conducted an experiment on the well-known Twins dataset \citep{almond2005costs}, where we manually include some missingness.

{\bf Imputation.} We define four different scenarios: either we impute missing values across (i) all the variables (indicated as ``All''); (ii) none of the variables (``Nothing''); (iii) only the variables in $\into{\widetilde{X}}$ (``Selective imputation''); or (iv) only the variables in $\out{\widetilde{X}}$ (``Sel. Complement''). Following \Cref{sec:imputation}, scenario (iii) (imputing only $\into{\widetilde{X}}$) should yield the best results given that bias is removed, while information with respect to treatment selection is retained. Imputation is performed using MICE \citep{MICE1}.

{\bf Models.} In our experiments, we evaluate the performance of three classes of learners in the imputation scenarios described above: T-learner, Doubly robust (DR) learner, and X-learner. We pair each treatment effects method with \texttt{XGBoost} due to its ability to naturally handle missing values. We refer to \citet{kunzel2019metalearners} or \citet{curth2021nonparametric} for an overview of various learners.

\subsection{Average treatment effects (ATE)} \label{sec:exp:ate}

{\bf Objective.} 
A first estimand to consider is the average treatment effect (defined in \cref{eq:ATE}). While the ATE has been considered since the 1970's \citep{rubin1974estimating}, it is still an important causal estimand today. It can be argued that ATEs suffer more from selection bias than the conditional ATE, as the average is computed over the {\it entire} dataset, unlike the CATE \citep{alaa2018}. In ATE, adjustment plays an important role and is usually achieved by placing non-uniform weights on each element in the dataset when computing an average. These weights often take the form of {\it inverse propensity weights} (IPW), $(p(W=1 | X)^{-1})$, used by, for example, a DR-learner and X-learner.

{\bf Results.} Consider \cref{tab:results:ate} where we reported the mean squared error (MSE) between an estimated ATE and the ground-truth ATE across ten folds, for ten differently sampled simulations (thus spanning 100 trained learners of each type). Given three popular ATE estimation strategies, we find that imputing $\into{\widetilde{X}}$ {\it significantly} performs better across all methods, confirming the insights in \Cref{sec:imputation}.

\subsection{Conditional average treatment effects (CATE)} \label{sec:exp:cate}

{\bf Objective.} 
Similar to \Cref{sec:exp:ate}, we will now evaluate how CATE-learners react to MCM. Specifically, we subject the same treatment effects learners to the same scenarios as we have in \Cref{sec:exp:ate}, and evaluate their CATE-predictions using the PEHE metric described above. As was the case for our ATE experiments, the chosen learners represent a wide range of different methodologies.

CATE estimates are evaluated using the {\it precision in estimating heterogeneous treatment effects} (PEHE), defined as, $\mathbb{E}_\mathcal{X}[(\tau(X) - \hat{\tau}(X))^2]$ in \citet{hill2011bayesian}, where $\hat{\tau}$ is a model's prediction. Naturally, there is a parallel with the MSE we used in \Cref{sec:exp:ate}, where PEHE essentially corresponds to the MSE over the predicted vector of CATEs in a test set.

{\bf Results.} \cref{tab:results:ate} depicts results for various CATE-learners across ten differently sampled simulations, each evaluated with ten different train and test sets--- as we have for our ATE in \Cref{sec:exp:ate}. Given these results, we empirically confirm that one should impute cautiously as imputing all data, no data, or wrong data, consistently performs worse than what we suggest: impute only $\into{\widetilde{X}}$. We confirm our findings with additional configurations in \Cref{app:experiments}.

\vspace{-8pt}
\section{Discussion} \label{sec:discussion}
\vspace{-5pt}
Estimating treatment effects is becoming more important in many practical settings. The adoption of these methods is largely the result of great academic effort to further push the boundaries of these methods' abilities. While practical adoption is indeed a point in favour of causal methods, it comes with a significant downside: unlike in academia, datasets used in practice are often victim to many imperfections. In our paper, we investigated one imperfection in particular: missing variables.

Missing variables in treatment effect settings behave differently from other settings; {\it they should be treated differently as a result}. In our paper, we argue that some missingness can be informative of the treatment, i.e. just like certain characteristics of an individual may determine their treatment, so too can the absence of certain measurements determine treatment. If one should impute these variables, that information is lost and can no longer be used to counter any resulting selection bias.

We argue further that we should not leave {\it all} variables unimputed. When there are variables that are only missing {\it because} an individual was given a particular treatment, then these missing variables are informative of the treatment and result in a covariate shift between treatment groups.

We believe the key finding of our paper is summarised as follows: {\it more care and thought should be put into imputing missing data when estimating treatment effects}. We confirm intuitively, theoretically, and empirically that selectively imputing missing variables can improve our ability to estimate treatment effect, while imputing the wrong variables {\it can lead to poorly modelled treatment effects}.

\bibliography{bibliography}
\bibliographystyle{apalike}

%%%%%%%%%%%%%%%%%%%%%%%%%%%%%%%%%%%%%%%%%%%%%%%%%%%%%%%%%%%%
\clearpage
\onecolumn
\appendix

\part{Appendix: MCM}
\parttoc

\section{Proof of \Cref{thm:indep}}\label{app:proof}

\thm*

\begin{proof}
Consider $p(Y=y, W=w | \out{\widetilde{X}} = \tilde{x}, \into{\dot{X}} = \dot{x})$, which we write as $p(y, w | \tilde{x}, \dot{x})$.
Then by introducing $X=x$ and using the definition of the conditional density, we have
\begin{equation} \label{eq:int}
\begin{aligned}
    p(y, w | \tilde{x}, \dot{x}) &= \int_x p(y, w, x| \tilde{x}, \dot{x}) \\
    &= \int_x \frac{p(y, w, x, \tilde{x}, \dot{x})}{p(\tilde{x}, \dot{x})}
\end{aligned}
\end{equation}
We can rewrite $p(y, w, x, \tilde{x}, \dot{x})$ using the graph structure as
\begin{equation} \label{eq:graphcon}
    p(y, w, x, \tilde{x}, \dot{x}) = p(y | x)p(\tilde{x}|x)p(w|\tilde{x})p(\dot{x}|x, w)p(x)\,.
\end{equation}
Noting that $X$ is {\em not} a collider of $W$ and $\into{\dot{X}}$, and by independence of $W$ and $\into{\dot{X}}$ we have that 
\begin{equation} \label{eq:indep}
    p(\dot{x} | x, w) = p(\dot{x} | x)
\end{equation}
%Given that $W \independent \into{\dot{X}}$, we then note that $X$ d-separates $\out{\widetilde{X}}$ and $\into{\dot{X}}$ so that $\out{\widetilde{X}}$ and $\into{\dot{X}}$ are conditionally independent given $X$. Therefore $p(\tilde{x} | x) p(\dot{x} | x) = p(\tilde{x}, \dot{x} | x)$. Similarly, $X$ d-separates $Y$ and the pair $(\out{\widetilde{X}}, \into{\dot{X}})$ and thus, 
By \Cref{app:lemma}, $\out{\widetilde{X}}$ and $\into{\dot{X}}$ are conditionally independent given $X$. Therefore $p(\tilde{x} | x) p(\dot{x} | x) = p(\tilde{x}, \dot{x} | x)$. Similarly, $X$ d-separates $Y$ and the pair $(\out{\widetilde{X}}, \into{\dot{X}})$ and thus, 
\begin{equation} \label{eq:joint}
    p(y|x)p(\tilde{x}|x)p(\dot{x}|x) = p(y, \tilde{x}, \dot{x} | x).
\end{equation}
Substituting (\ref{eq:indep}) and (\ref{eq:joint}) into (\ref{eq:graphcon}), gives us
\begin{equation}
    p(y, w, x, \tilde{x}, \dot{x}) = p(y, \tilde{x}, \dot{x} | x)p(w | \tilde{x})p(x).
\end{equation}
We then note that $\out{\widetilde{X}}$ is {\em not} a collider of $W$ and $\into{\dot{X}}$ so that,
\begin{equation}
    p(w | \tilde{x}) = p(w | \tilde{x}, \dot{x}).
\end{equation}
Substituting this all into (\ref{eq:int}) gives us
\begin{equation}
\begin{aligned}
    p(y, w | \tilde{x}, \dot{x}) &= \int_x \frac{p(y, \tilde{x}, \dot{x} | x)p(w | \tilde{x}, \dot{x})p(x)}{p(\tilde{x}, \dot{x})} \\
    &= \frac{p(y, \tilde{x}, \dot{x})p(w | \tilde{x}, \dot{x})}{p(\tilde{x}, \dot{x})} \\
    &= p(y | \tilde{x}, \dot{x})p(w | \tilde{x}, \dot{x})
\end{aligned}
\end{equation}
which proves the result.

\end{proof}

\begin{lemma} \label{app:lemma}
Suppose the setting of \Cref{thm:indep}. Then $\out{\tilde{X}}$ and $\dot{X}$ are conditionally independent given $X$.
\end{lemma}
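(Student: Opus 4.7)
The plan is to verify the conditional independence directly from the joint density, using the factorization induced by the DAG in \cref{fig:mcm:full} together with the same kind of d-separation argument that powers the main theorem's proof. First, I would write the joint as a product of local conditionals dictated by the graph: $p(\out{\widetilde{x}}, \into{\dot{x}}, w, \into{\widetilde{x}}, x) = p(x)\, p(\out{\widetilde{x}}\mid x)\, p(w\mid \out{\widetilde{x}})\, p(\into{\widetilde{x}}\mid x, w)\, p(\into{\dot{x}}\mid \into{\widetilde{x}})$. This factorization is exactly what the arrows in \cref{fig:mcm:full} give, since $X$ is a root, $\out{\widetilde{X}}$ has only $X$ as parent, $W$ has only $\out{\widetilde{X}}$ as parent, $\into{\widetilde{X}}$ has parents $X$ and $W$, and $\into{\dot{X}}$ is a pure child of $\into{\widetilde{X}}$.

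Next, I would marginalize out $\into{\widetilde{X}}$, noting that $\int p(\into{\widetilde{x}}\mid x, w)\, p(\into{\dot{x}}\mid \into{\widetilde{x}})\, d\into{\widetilde{x}} = p(\into{\dot{x}}\mid x, w)$, which leaves $p(\out{\widetilde{x}}, \into{\dot{x}}, x) = p(x)\, p(\out{\widetilde{x}}\mid x) \int_w p(w\mid \out{\widetilde{x}})\, p(\into{\dot{x}}\mid x, w)\, dw$. The crux is then to replace $p(\into{\dot{x}}\mid x, w)$ by $p(\into{\dot{x}}\mid x)$, which is precisely the step already carried out inside the main proof of \Cref{thm:indep}: from the hypothesized unconditional independence $W \independent \into{\dot{X}}$ together with the observation that $X$ is not a collider on any path between $W$ and $\into{\dot{X}}$, conditioning on $X$ does not open a new active path, so the independence is preserved as $W \independent \into{\dot{X}} \mid X$.

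Once that substitution is justified, the argument closes by bookkeeping: the integral factors as $p(\into{\dot{x}}\mid x)\int_w p(w\mid \out{\widetilde{x}})\, dw = p(\into{\dot{x}}\mid x)$, which gives $p(\out{\widetilde{x}}, \into{\dot{x}}, x) = p(x)\, p(\out{\widetilde{x}}\mid x)\, p(\into{\dot{x}}\mid x)$, and dividing through by $p(x)$ yields $p(\out{\widetilde{x}}, \into{\dot{x}}\mid x) = p(\out{\widetilde{x}}\mid x)\, p(\into{\dot{x}}\mid x)$, which is exactly the claim.

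I expect the only nontrivial step to be the upgrading of the unconditional $W \independent \into{\dot{X}}$ to the conditional $W \independent \into{\dot{X}} \mid X$; this is the same structural move invoked in the main theorem, and it relies on $X$ not being a collider on any path connecting $W$ and $\into{\dot{X}}$ in \cref{fig:mcm:full}. Every other step is a pure factorization/marginalization manipulation of densities, so the proof should be quite short once that link to the hypothesis is established.
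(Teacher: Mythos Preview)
Your proposal is correct and essentially identical to the paper's own proof: both write the joint density according to the DAG factorization, marginalize out $\into{\widetilde{X}}$ to obtain $p(\into{\dot{x}}\mid x,w)$, invoke the hypothesis $W\independent\into{\dot{X}}$ together with the observation that $X$ is not a collider to reduce this to $p(\into{\dot{x}}\mid x)$, and then integrate out $W$. The paper compresses the marginalization and the independence substitution into a single displayed line, whereas you spell each step out, but the route is the same.
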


\begin{proof}
By appealing to the DAG to write the full joint distribution of $p(x, \out{\tilde{x}}, w, \into{\tilde{x}}, \dot{x})$ we have
\begin{equation}
\begin{aligned}
    p(\out{\tilde{x}}, \dot{x} | x) &= \int_{\into{\tilde{x}}, w} p(\out{\tilde{x}} | x)p(w|\out{\tilde{x}}, x) p(\into{\tilde{x}} | x, w) p(\dot{x} | \into{\tilde{x}}, x) \\
    &= \int_w p(\out{\tilde{x}} | x) p(w | \out{\tilde{x}})p(\dot{x} | x) \\
    &= p(\out{\tilde{x}} | x) p(\dot{x} | x) \int_w p(w | \out{\tilde{x}}) \\
    &= p(\out{\tilde{x}} | x) p(\dot{x} | x)
\end{aligned}
\end{equation}
\end{proof}

\section{Related work} \label{app:related_work}

Ideas of dealing with missing variables have been around for some time \cite{rosenbaum1984reducing}, leading to many relevant works. However, let us first address works that may {\it seem} relevant, but are not. In particular, we are not concerned with missingness in either treatment or outcome variables \citep{robins2000, rotnitzky1998semiparametric}, only in the confounding variables $X$.

{\bf Censoring.} The case of missing outcomes, or non-response \cite{rotnitzky1998semiparametric}, is well known in survival analysis \citep[Chapter 17]{van2003unified, hernan2002estimating, hernan2010causal}. While survival analysis is indeed concerned with a particular case of missingness (that of outcomes), it is only relevant in a setting over time\footnote{In fact, survival analysis is sometimes termed ``time-to-event analysis'' \cite{tolles2019}.}. For example, if one wishes to predict the number of years a patient will live, when given a drug, we will only know the answer when they die. If the patient hasn't died yet at the time of analysis, we actually don't know the answer and have to consider the outcome missing (censored). Some works that aim to connect both fields include \citet{stitelman2012general, curth2021survite, tabib2020non, andersen2017causal, prague2016accounting}.

{\bf {\em m}-graphs.} Another line of research is that of {\it missingness graphs} or $m$-graphs \cite{mohan2021graphical}. In short, an $m$-graph is a probabilistic graphical model \cite{koller2009probabilistic, pearl2009causality} which explicitly includes missingness indicators as variables. Naturally, this is related as the way in which we describe MCM is through an $m$-graph. Using an $m$-graph, one can determine whether or not an effect is identifiable \cite{pearl2009causality}. While this has received some attention in graphical causality \cite{shpitser2015missing, mohan2014graphical, mohan2013missing} which aim to recover identifiability in MAR and MNAR settings or perform structure learning despite missingness \cite{gain2018structure, kyono2021miracle}, never before were these $m$-graphs considered in a potential outcomes setting, as we have. Note that none of the aforementioned works consider partial imputation, specifically to correctly identify a causal effect.

{\bf Conditional independence.} Identifying the causal effect in a potential outcomes setting, requires conditional independence: $W \independent Y(w) |X$, for all $w$ and $X$. If this is not the case, then we cannot interpret the estimated effect as {\it causal}. While much of the potential outcomes literature considers the setting where this is simply the case, it seems that these assumptions are simply carried over to the case that includes missingness. Having only three variables, the conditional independence assumptions merely amount to assuming that variables beyond $X$ are {\it ignorable}, i.e. all relevant variables are measured. While this may very well be the case for three variables, this assumption may be ``stretching it'' when we include missingness indicators. This is especially clear when we portray the situation as an $m$-graph.

We are not the first to portray missingness in the potential outcomes setting as an $m$-graph. To our knowledge, \citet{mayer2020doubly} was the first to do so (although no mention of $m$-graphs was made in their paper), and we know of none who has done so since. As such, \citet{mayer2020doubly} is by far the most related work to ours, and we compare with them extensively in \Cref{app:CIT_CIO}. In particular, \citet{mayer2020doubly} proposes two $m$-graphs for CIT and CIO (\cref{eq:CIT,eq:CIO}), which we have included in \cref{fig:CIT:perm:original,fig:CIO:perm:original}, respectively. In \Cref{app:CIT_CIO}, we argue that neither of these $m$-graphs can exist in practice. Furthermore \cref{fig:CIT:perm:original,fig:CIO:perm:original}, only take into account missingness that causes treatment, but not the reverse, making CIT and CIO much more restrictive, as we have indicated in our main text.

\section{Additional experiments} \label{app:experiments}

Here we present the following additional experiments to those presented in our main text: the same experimental setup as in \cref{tab:results:ate} with an alternative imputation scheme, a sensitivity analysis on the amount of missingness in a dataset, and the same experimental setup as in \cref{tab:results:ate}, but using a different (real) dataset.

\begin{figure*}[t]
    \centering

    \resizebox{\textwidth}{!}{\begin{tikzpicture}
        \pgfplotsset{footnotesize,samples=10}
        \begin{groupplot}[group style = {group size = 2 by 1, horizontal sep = 30pt}, width = 6.0cm, height = 4.0cm]
            
            \nextgroupplot[ 
                title = {T-Learner},
                legend style = { 
                    column sep = 5pt, 
                    legend columns = -1, 
                    legend to name = grouplegend, anchor=west
                    },
                xlabel={amount of missingness},
                ylabel={PEHE},
                ymin=0.2, ymax=1.4,
                xtick={0.1, 0.2, 0.3, 0.4, 0.5},
                ytick={0.4,0.6,0.8,1,1.2},
                ymajorgrids=true,
                grid style=dashed,
            ]
                \addplot[color=Black,mark=square,]
                    coordinates {(0.1, 0.90667) (0.2, 1.08401) (0.3, 1.04266) (0.4, 0.79051) (0.5, 0.58199)};
                    
                \addplot[color=Blue,mark=triangle,]
                    coordinates {(0.1, 0.87106) (0.2, 0.87011) (0.3, 0.96961) (0.4, 0.77991) (0.5, 0.83227)};
                    
                \addplot[color=FireBrick,mark=o,]
                    coordinates {(0.1, 0.78409) (0.2, 0.53562) (0.3, 0.52142) (0.4, 0.49251) (0.5, 0.5193)};
                    
                \addplot[color=DarkGreen,mark=diamond,]
                    coordinates {(0.1, 0.91584) (0.2, 1.18526) (0.3, 1.21693) (0.4, 0.96973) (0.5, 0.79995)};

            \iffalse
            \nextgroupplot[
                title = {DR-Learner},
                legend style = { column sep = 5pt, legend columns = -1, legend to name = grouplegend,},
                xlabel={amount of missingness},
                %xmin=0.05, xmax=0.55,
                %ymin=0, ymax=1,
                xtick={0.1, 0.2, 0.3, 0.4, 0.5},
                ytick={0.2,0.4,0.6,0.8},
                ymajorgrids=true,
                grid style=dashed,
            ]
                \addplot[color=Black,mark=square,]
                    coordinates {(0,23.1)(10,27.5)(20,32)(30,37.8)(40,44.6)(60,61.8)(80,83.8)(100,114)};
                    
                \addplot[color=Blue,mark=triangle,]
                    coordinates {(0,21.1)(10,24.5)(20,30)(30,34.8)(40,42.6)(60,60.8)(80,80.8)(100,110)};
                    
                \addplot[color=FireBrick,mark=o,]
                    coordinates {(0,21.1)(10,24.5)(20,30)(30,34.8)(40,42.6)(60,60.8)(80,80.8)(100,110)};
                    
                \addplot[color=DarkGreen,mark=diamond,]
                    coordinates {(0,21.1)(10,24.5)(20,30)(30,34.8)(40,42.6)(60,60.8)(80,80.8)(100,110)};
            \fi

            \nextgroupplot[
                title = {X-Learner},
                legend style = { column sep = 5pt, legend columns = 1, legend to name = grouplegend,},
                xlabel={amount of missingness},
                xmin=0.05, xmax=0.55,
                ymin=0, ymax=1,
                xtick={0.1, 0.2, 0.3, 0.4, 0.5},
                ytick={0.2,0.4,0.6,0.8,1,1.2,1.4,1.6,1.8,2},
                ymajorgrids=true,
                grid style=dashed,
            ]
                \addplot[color=Black,mark=square,]
                    coordinates {(0.1, 0.4602) (0.2, 0.65027) (0.3, 0.62072) (0.4, 0.56482) (0.5, 0.44103)};
                    \addlegendentry[Black]{All}
                    
                \addplot[color=Blue,mark=triangle,]
                    coordinates {(0.1, 0.15049) (0.2, 0.22665) (0.3, 0.25882) (0.4, 0.28314) (0.5, 0.21778)};
                    \addlegendentry[Black]{Nothing}
                    
                \addplot[color=FireBrick,mark=o,]
                    coordinates {(0.1, 0.12644) (0.2, 0.17995) (0.3, 0.1631) (0.4, 0.20949) (0.5, 0.20363)};
                    \addlegendentry[Black]{Selective}
                    
                \addplot[color=DarkGreen,mark=diamond,]
                    coordinates {(0.1, 0.46828) (0.2, 0.43843) (0.3, 0.46647) (0.4, 0.32621) (0.5, 0.26493)};
                    \addlegendentry[Black]{Sel. Complement}
        \end{groupplot}
        \node at ($(group c2r1) + (4.5cm,0cm)$) {\ref{grouplegend}}; 
    \end{tikzpicture}}
    \caption{{\bf Sensitivity analysis on the amount of missingness: CATE.} We report for two learners their performance (PEHE, y-axis) as a function of the amount of missingness (x-axis). For each setting of the amount of missingness, we sample 50 different train and test sets to calculate an average and std (in line with what is reported in \cref{tab:results:ate} and documented in our provided code-files).}
    \label{fig:app:res:amount_miss}
    \rule{\linewidth}{.75pt}
\end{figure*}
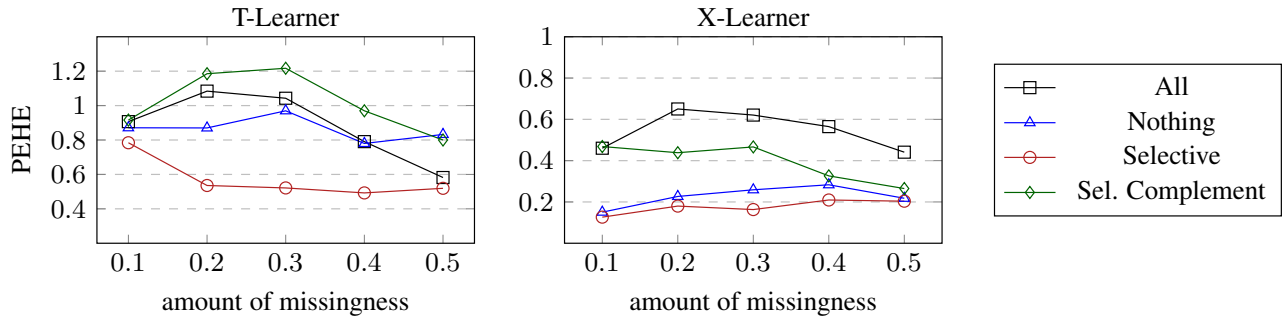

{\bf Imputation with representation learning.} Consider \cref{tab:res:gain}, where we have repeated the experiment in \cref{tab:results:ate} with GAIN \citep{GAIN}; a deep learning imputation method based on GANs \citep{goodfellow2014generative}. While notoriously difficult to train, we find the combination of GAIN and MCM to be an interesting one. Specifically, the discriminator is tasked with classifying imputed samples from real samples, making it harder to predict which samples were imputed. The way the discriminator does this, is by predicting a binary mask of sorts, corresponding to what we have defined as $Z$. Having 50/50 prediction (resulting in the worst cross-entropy for binary prediction), we say that $Z$ is independent of the imputed result, which then corresponds with \cref{thm:indep}. Note that in \cref{thm:indep} we pose independence of $W$. Seeing that $\into{Z}$ is a mediator variable between $W$ and $\into{\widetilde{X}}$, we have $\into{\widetilde{X}} \independent W | X$ given that $\into{\widetilde{X}} \independent Z | X$.

{\bf Sensitivity.} As a second additional experiment, consider \cref{fig:app:res:amount_miss}, where we have repeated the same experiment in \cref{tab:results:ate}, but with different levels of missingness. Not only do these results confirm that selective imputation is best across the board, they also bring to light an interesting phenomenon: the trade-off between prediction gain from imputation versus bias reduction. As $Y(w)$ is a function of the fully observed $X$, it makes sense that predicting $Y(w)$ from $X$ directly would yield the best results. This insight is the motivation why imputation is so common in standard supervised learning. From our discussion in \Cref{sec:imputation}, we learn that imputation in the treatment effects setting introduces bias, but perhaps there is only very little bias. In some cases, it might be better to impute, {\it despite bias}.

What we see in \cref{fig:app:res:amount_miss} is the combination of the balance trade-off described above, with another realisation: if too much data is missing, bias might actually reduce, as the associated patterns in the data become less apparent. Take the most extreme case where all data in $\into{\widetilde{X}}$ is missing. Of course, then there is no bias anymore, as there is no more data to bias. With less bias in the data, it might be best for a model to impute. This is clearly visible in the leftmost plot (T-Learner), where not imputing does not seem to improve with more missingness; only the ``Sel. Complement'' and ``All''--- which introduce bias ---seem to improve as more data is missing, likely since there is less bias available to introduce by imputing. Overall, we find that ``Selective'' outperforms the rest.  

{\bf Semi-real data.} In order to not only rely on synthetic data, we also test our findings in the well-known and used Twins dataset \cite{almond2005costs}. Naturally, it is semi-real, as we need to simulate the missingness mechanism such that we know the exact factors of $\into{Z}$ and $\out{Z}$. In order to not let any other bias creep into the missingness mechanisms, we removed all rows that already contained missing variables. 

Consider \cref{tab:res:twins} which presents results using the same setup as \cref{tab:results:ate,tab:res:gain}. Again, we find no surprises and report consistent performance improvements when selectively imputing the $\into{Z}$ factor of the data. For this, we again relied on MICE for imputing the data \cite{MICE1}.

\begin{table*}[t]
    \centering
    
    \caption{{\bf CATE results with GAIN.} We repeat our experiment from the main text. For each setting, we report a PEHE on 100 {\it differently sampled} simulations and averaged the results, standard deviation is reported in brackets. Imputing only $\into{\widetilde{X}}$, while keeping $\out{\widetilde{X}}$ as is, performs best across learners and treatments. {\bf Lower is better}, our proposal is \colorbox{black!5}{shaded}.} \label{tab:res:gain}
    \begin{tabularx}{\textwidth}{rll | CCC}
        \toprule
        &\multicolumn{2}{c}{\bf Impute} &{\bf T-learner} & {\bf Doubly rob.} & {\bf X-learner} \\
        &\multicolumn{1}{c}{description} & \multicolumn{1}{c}{covariates} &&&\\
        
        \toprule
            
        \multirow{4}{*}{\rotatebox[origin=c]{90}{\bf PEHE}}
        & All & $ \out{\widetilde{X}} \cup \into{\widetilde{X}} $ 
        &  0.8720  \footnotesize{(0.072)} & 1.4276  \footnotesize{(1.410)} & 0.5388  \footnotesize{(0.112)}  \\
        &Nothing & $\emptyset$
        &  0.8233  \footnotesize{(0.064)} & 0.2421  \footnotesize{(0.579)} & 0.4724 \footnotesize{(0.072)} \\
        &\CC{5}Selective & \CC{5}$\into{\widetilde{X}}$ 
        & \CC{5} {\bf 0.5260}  \footnotesize{(0.105)} & \CC{5}{\bf 0.1497}  \footnotesize{(0.210)} & \CC{5} {\bf 0.2633 } \footnotesize{(0.069)} \\
        &Sel. Complement  &$\out{\widetilde{X}}$
        &  0.8855  \footnotesize{(0.056)} & 0.4726  \footnotesize{(0.971)} & 0.6924  \footnotesize{(0.152)} \\

        % PEHE | 0
        \midrule
        \multirow{4}{*}{\rotatebox[origin=c]{90}{\bf PEHE\textsubscript{W=0}}}
        & All & $\out{\widetilde{X}} \cup \into{\widetilde{X}}$ 
        &  0.8918  \footnotesize{(0.107)} &1.6743  \footnotesize{(1.218)} &  0.5684  \footnotesize{(0.138)} \\
        &Nothing & $\emptyset$ 
        &  0.7079  \footnotesize{(0.087)} & 0.2719  \footnotesize{(0.340)} & 0.4829  \footnotesize{(0.116)}  \\
        &\CC{5}Selective & \CC{5}$\into{\widetilde{X}}$ 
        & \CC{5}  {\bf 0.5754}  \footnotesize{(0.092)} & \CC{5}{\bf 0.1539}  \footnotesize{(0.209)} &\CC{5} {\bf 0.2699 } \footnotesize{(0.064)} \\
        &Sel. Complement  &$\out{\widetilde{X}}$
        &  0.8342  \footnotesize{(0.105)} & 0.3531  \footnotesize{(0.604)} & 0.7313  \footnotesize{(0.200)}\\
        
        % PEHE | 1
        \midrule
        \multirow{4}{*}{\rotatebox[origin=c]{90}{\bf PEHE\textsubscript{W=1}}}
        & All & $ \out{\widetilde{X}} \cup \into{\widetilde{X}}$ 
        &  0.8650  \footnotesize{(0.074)} & 2.0708  \footnotesize{(0.709)}&  0.5169  \footnotesize{(0.107)}\\
        &Nothing & $\emptyset$ 
        & 0.8844  \footnotesize{(0.080)} & 0.2798  \footnotesize{(0.706)} & 0.4650  \footnotesize{(0.070)} \\
        & \CC{5}Selective & \CC{5}$\into{\widetilde{X}}$ 
        & \CC{5}  {\bf 0.5035}  \footnotesize{(0.132)} & \CC{5} {\bf 0.1482}  \footnotesize{(0.213)} &\CC{5}{\bf  0.2590}  \footnotesize{(0.075)}  \\
        &Sel. Complement  &$\out{\widetilde{X}}$
        &  0.9149  \footnotesize{(0.068)} & 0.5332  \footnotesize{(1.197)}&  0.6664  \footnotesize{(0.157)}\\
        
        \bottomrule
        
    \end{tabularx}
    
\end{table*}

\begin{table*}[h!]
    \centering
    
    \caption{{\bf Results on Twins data.} We repeat our experiments in \cref{tab:results:ate}, presented in our main text, but on a different (real world) dataset \cite{almond2005costs}. Again, we sampled 100 different subsets of the data, with a different outcome and missing variables, and averaged the results with standard deviation in brackets. Different from our reported results in the main text, we normalised the results in order to better compare across methods as we found the performance discrepancy to increase when compared to the fully synthetic setup. Again, {\bf lower is better}, and our proposal is \colorbox{black!5}{shaded}.} \label{tab:res:twins}
    \begin{tabularx}{\textwidth}{rll | CCC}
        \toprule
        &\multicolumn{2}{c}{\bf Impute} &{\bf T-learner} & {\bf Doubly rob.} & {\bf X-learner} \\
        &\multicolumn{1}{c}{description} & \multicolumn{1}{c}{covariates} &&&\\
        
        \midrule
        \multicolumn{6}{c}{\it Results on ATE-estimation} \\
        \midrule
        
        \multirow{4}{*}{\rotatebox[origin=c]{90}{\bf MSE}}
        & All & $ \out{\widetilde{X}} \cup \into{\widetilde{X}} $ 
        &  5.7554  \footnotesize{(2.535)} &  8.9752  \footnotesize{(3.047)} &  30.792  \footnotesize{(8.459)} \\
        
        &Nothing & $\emptyset$ 
        &  6.3382  \footnotesize{(3.156)} &  10.310  \footnotesize{(4.182)} &  8.9802  \footnotesize{(5.661)} \\
        
        &\CC{5}Selective & \CC{5}$\into{\widetilde{X}}$ 
        & \CC{5} {\bf 3.8482}  \footnotesize{(2.017)}& \CC{5} {\bf 5.7543}  \footnotesize{(2.403)}& \CC{5} {\bf 8.1534}  \footnotesize{(4.562)} \\
        
        &Sel. Complement  &$\out{\widetilde{X}}$
        &  6.5303  \footnotesize{(3.150)} &  9.2979  \footnotesize{(3.739)} &  10.369  \footnotesize{(6.097)} \\

        \midrule
        \multicolumn{6}{c}{\it Results on CATE-estimation} \\
        \midrule

        \multirow{4}{*}{\rotatebox[origin=c]{90}{\bf n-PEHE}}
        & All & $ \out{\widetilde{X}} \cup \into{\widetilde{X}}$ 
        &  0.7572  \footnotesize{(0.101)} &  0.0699  \footnotesize{(0.122)} &  0.8971  \footnotesize{(0.065)} \\
        &Nothing & $\emptyset$
        &  0.7393  \footnotesize{(0.295)} &  0.0015  \footnotesize{(0.003)} &  0.9191  \footnotesize{(0.114)} \\
        &\CC{5}Selective & \CC{5}$\into{\widetilde{X}}$ 
        & \CC{5} {\bf 0.7289}  \footnotesize{(0.302)}& \CC{5} {\bf 0.0013}  \footnotesize{(0.015)}& \CC{5} {\bf 0.8868}  \footnotesize{(0.119)} \\
        &Sel. Complement  &$\out{\widetilde{X}}$
        &  0.7591  \footnotesize{(0.095)} &  0.1441  \footnotesize{(0.376)} &  0.9067  \footnotesize{(0.062)} \\

        % PEHE | 0
        \midrule
        \multirow{4}{*}{\rotatebox[origin=c]{90}{\bf n-PEHE\textsubscript{W=0}}}
        & All & $ \out{\widetilde{X}} \cup \into{\widetilde{X}} $ 
        &  0.9590  \footnotesize{(0.064)} &  0.0792  \footnotesize{(0.116)} &  0.9191  \footnotesize{(0.071)} \\
        &Nothing & $\emptyset$ 
        &  0.7451  \footnotesize{(0.079)} &  0.0026  \footnotesize{(0.004)} &  0.7557  \footnotesize{(0.066)} \\
        &\CC{5}Selective & \CC{5}$\into{\widetilde{X}}$ 
        & \CC{5} {\bf 0.7407}  \footnotesize{(0.076)}& \CC{5} {\bf 0.0028}  \footnotesize{(0.028)}& \CC{5} {\bf 0.7406}  \footnotesize{(0.064)} \\
        &Sel. Complement  &$\out{\widetilde{X}}$
        &  0.9570  \footnotesize{(0.066)} &  0.1522  \footnotesize{(0.376)} & 0.9139  \footnotesize{(0.068)} \\
        
        % PEHE | 1
        \midrule
        \multirow{4}{*}{\rotatebox[origin=c]{90}{\bf n-PEHE\textsubscript{W=1}}}
        & All & $ \out{\widetilde{X}} \cup \into{\widetilde{X}} $ 
        &  0.7152  \footnotesize{(0.109)} &  0.0695  \footnotesize{(0.122)} &  0.8803  \footnotesize{(0.066)} \\
        &Nothing & $\emptyset$ 
        &  0.7188  \footnotesize{(0.319)} &  0.0015  \footnotesize{(0.002)} &  0.9090  \footnotesize{(0.120)} \\
        & \CC{5}Selective & \CC{5}$\into{\widetilde{X}}$ 
        & \CC{5} {\bf 0.7086}  \footnotesize{(0.327)}& \CC{5} {\bf 0.0001}  \footnotesize{(0.015)}& \CC{5} {\bf 0.8700}  \footnotesize{(0.126)} \\
        &Sel. Complement  &$\out{\widetilde{X}}$
        &  0.7175  \footnotesize{(0.104)} &  0.1438  \footnotesize{(0.363)} &  0.8810  \footnotesize{(0.064)} \\
        
        \bottomrule
        
    \end{tabularx}
    
\end{table*}

\section{Data and reproducibility} \label{app:data}

Standard in treatment effects literature, proper empirical validation requires synthetic data. Here we describe our synthetic setup, and how we sample from it precisely. We do this by including Python code, here in the appendix. For further clarifications (beyond this appendix), both on data generation as well as selective imputation, we refer the reader to our online code repository:

\begin{mdframed}[backgroundcolor=blue!5]
\centering
\vspace{5pt}
\url{https://github.com/jeroenbe/mcm}
\vspace{5pt}
\end{mdframed}

We also like to point the interested reader to a wide collection of related code repositories to this work, which includes the code repository of this paper:

\begin{mdframed}[backgroundcolor=blue!5]
\centering
\vspace{5pt}
\url{https://github.com/vanderschaarlab/mlforhealthlabpub}
\vspace{5pt}
\end{mdframed}

{\bf [Step 1] \emph{Sample $X$}.} In order for imputation to make sense, there has to exist some correlation between variables in $X$, as such sampling from a standard normal with the identity matrix as a covariance matrix, will not suffice. As such, we sample from a normal distribution with a random (positive semidefinite) covariance matrix, spanning 20 dimensions:

\begin{lstlisting}[language=Python]
import numpy as np
    
def _generate_covariates(d, n):
    assert 0 < d
    assert 0 < n

    A = np.random.rand(d,d)
    cov = np.dot(A, A.transpose())

    X = np.random.multivariate_normal(np.zeros(d), cov, size=n)
    X /= (X.max() - X.min())

    return X
\end{lstlisting}
  
In the above code, we have \texttt{d=20} the dimension count, and \texttt{n=10000} the sample size. In our experiments, we sample 10 different train and test sets from the data to calculate  per simulation descriptives.

{\bf [Step 2] \emph{Sample $\out{Z}$}.} From $X$ we generate $\out{Z}$ as follows:
\begin{lstlisting}[language=Python, firstnumber=14]
    highest_border = X[:,:z_dim].argsort(axis=1)[:,
        -int(np.max((int(np.round(amount_of_missingness * z_dim)), 1)))]
    Z_out = list(x >= x[highest_border[i]] for i, x in enumerate(X[:,:z_dim]))
    Z_out = np.array(Z_out).astype(int)
    Z_out = np.abs(Z_out-1)
\end{lstlisting}
 
In the above code, we have two main parameters: \texttt{z\_dim=10} indicating the number of variables in $\out{Z}$, and \texttt{amount\_of\_missingness=0.3} indicating the fraction of the data that is missing. in \texttt{ln 14} above, we calculate a threshold, when the value of $X_i$ above this threshold, that variable is missing. This threshold corresponds to the number of dimensions that need to be missing in order to respect \texttt{amount\_of\_missingness}. Notice that $\out{Z}$ now corresponds to \texttt{X[:,:z\_dim]}.  

{\bf [Step 3] \emph{Sample $W$}.} We make very explicit that treatment choices depend on $\out{Z}$.
\begin{lstlisting}[language=Python, firstnumber=19]
    W = []
    for z_d in Z_out:
        if 0 == z_d[-1]:
            w = 0
        elif 0 in z_d[:int(np.floor(z_dim/2))]:
            w = 1
        else:
            w = np.random.binomial(1, .5)
        W.append(w)
    W = np.array(W)
\end{lstlisting}

{\bf [Step 4] \emph{Sample $\into{Z}$}.} Sampling $\into{Z}$, which depends on $W$ in such a way that the arrow is identifiable, requires interaction between $X$ and $W$\footnote{Note that in the non-binary setting, this interaction may not be necessary.}. For this, we sample two random vectors (\texttt{theta\_z\_in\_0}, and \texttt{theta\_z\_in\_1}) and let those interact with $X$ to decide $\into{Z}$. As was the case with $\out{Z}$, we also calculate the number of variables that should be affected (calculated in \texttt{ln 29-32}). Note that the \texttt{dim\_count} in this setting only corresponds {\it approximately} to the \texttt{amount\_of\_missingness}.  
\begin{lstlisting}[language=Python, firstnumber=29]
    import scipy

    dim_count = np.round(amount_of_missingness * (d - z_dim) * 2)
    dim_count = np.max((dim_count, 1))
    dim_count = np.min((dim_count, int((d - z_dim) / 2)))
    dim_count = int(dim_count)
    
    theta_z_in_0 = np.full(dim_count, scipy.stats.norm.ppf(1 - amount_of_missingness))
    theta_z_in_1 = np.full(dim_count, scipy.stats.norm.ppf(1 - amount_of_missingness))

    Z_in = np.zeros((n, d - z_dim))
    for i, z in enumerate(Z_in):
        x = X[i, z_dim:z_dim+dim_count]
        if W[i]:
            Z_in[i, -dim_count:] = (x - X[:, z_dim:z_dim+dim_count].mean(axis=0)) > (theta_z_in_1 * x.std(axis=0))
        else:
            Z_in[i, :dim_count] = (x - X[:, z_dim:z_dim+dim_count].mean(axis=0)) > (theta_z_in_0 * x.std(axis=0))
    Z_in = np.abs(Z_in-1)
\end{lstlisting}

{\bf [Step 5] Sample $Y(w)$.} Generating outcomes is done simply by sampling two linear functions.
\begin{lstlisting}[language=Python, firstnumber=47]
    def _generate_outcomes(X, W):
        theta = np.random.randn(X.shape[1]) / 10
        theta_y0 = np.ones(X.shape[1]) + theta
        theta_y1 = np.ones(X.shape[1]) * -1 + theta
        
        Y0 = np.sum(X * theta_y0, 1)
        Y1 = np.sum(X * theta_y1, 1)
    
        Y = np.array([Y0[i] if w == 0 else Y1[i] for i, w in enumerate(W)]) 
        Y += np.random.randn(X.shape[0])*.1
    
        CATE = Y1 - Y0
        
        return Y0, Y1, CATE, Y
\end{lstlisting}

{\bf [Step 6] Identifiability.} All the above are simple functions, which are made identifiable (such that they respect the DAG) through a non-linearity. Then, $\widetilde{X}$ is generated by combining $Z$ into $X$.  
\begin{lstlisting}[language=Python, firstnumber=61]
    X = np.abs(X)

    X_tilde = X.copy()
    X_tilde[:,z_dim:][Z_out==0] = missing_value
    X_tilde[:,:z_dim][Z_in==0] = missing_value
\end{lstlisting}

\section{MCM examples} \label{app:example}

In the main manuscript, we provide an example (knee pain) of how MCM arises in practice, and in particular how missingness may be caused by the treatment.
In addition to our example, we provide an additional (non-medical) example below.

\textbf{Job-training program.}
We now present a quite different example of how MCM may arise in practice from outside of medicine. %First, which situation may lend itself in applying a treatment effects model?
\begin{adjustwidth}{.5cm}{}
{\it Imagine a job-training program to boost employment. Before sponsoring such a program, a legislative body may want to estimate its effect before widespread adoption. Using past data on the program, the body has to rely on causal methods to infer the effect of the training program before formal adoption.}
\end{adjustwidth}

Naturally, one requires data to learn a treatment effects model. But as is motivated throughout the literature, these data may be biased and cannot directly be used to learn a good treatment effects model.
\begin{adjustwidth}{.5cm}{}
{\it In the past, the program was not offered to everyone. Those with a job were less likely to be considered. As were those that had been unemployed for a long time. Perhaps age played a role; being less likely to switch careers, older people would benefit less from job training.}
\end{adjustwidth}

While there exists ``natural'' bias, for example age may influence whether someone may or may not receive the treatment. In MCM, we argue that the fact that a variable is missing, may further bias the data.
\begin{adjustwidth}{.5cm}{}
{\it Without knowledge of someone's age, it was deemed better to not offer the program. This is understandable, as the high price of the program requires some level of certainty.}
\end{adjustwidth}

Bias does not stop there. Even when treatment is assigned, data may be biased further.
\begin{adjustwidth}{.5cm}{}
{\it Once an applicant has accepted their offer for job training, the organisers require some additional information. Imagine, the program's registration process asking the applicant's current address, the job of their spouse, the number of children they have, etc. Had they not accepted to participate in the program, they would be less likely to provide this additional information.}
\end{adjustwidth}

Understanding where bias may come from, what could go wrong when these sources are ignored, or brushed over too lightly? One way this bias may sneak into our models is by using an estimator that knows how to handle missing data (such as an xgboost model) and choose to not impute anything.
\begin{adjustwidth}{.5cm}{}
{\it The organisers found that they had vastly overestimated the effect of job training. Those willing to provide additional information were also very motivated to advance their careers, and conversely, those that were reluctant to provide information were not motivated at all. By not imputing, the organisers' model was identifying the effect of motivation, not of job training.}
\end{adjustwidth}

However, when naively imputing everything, bias still creeps into our model.
\begin{adjustwidth}{.5cm}{}
{\it However, when they did decide to impute, their model seemed to underestimate the treatment effect, as the imputation strategy filled in absent age close to the mean age, however, it was mostly older people who did not provide their age. When adjusting for ``younger'' people, their model was actually confusing its estimates with data from older people, where their job-training program yielded less effect.}
\end{adjustwidth}
This final point is subtle. Through the act of debiasing, the model ignored vital information. Indeed, there exists some bias from $X$ to $W$, through the missing variables: namely, older people are more likely to have missing age. This is a clear example of removing a placeholder (the missingness indicator) for information that contributed to bias. Crucially, age plays an important role in outcome as well, propagating this bias into the effect estimates.

\section{Exhaustive DAG-search over CIT and CIO} \label{app:CIT_CIO}
CIT and CIO are assumptions over $X$, $\widetilde{X}$, $Z$, $Y(w)$, and $W$. While keeping 
\begin{tikzpicture}[
    roundnode/.style={circle, draw=black,  minimum size=5mm, inner sep=0},
    baseline=-1mm
]
    % NODES
    \node[roundnode] (x) {$X$};
    \node[roundnode] (z) at (1, 0) {$Z$};
    \node[roundnode, fill=blue!15] (x_tilde) at (2, 0) {$\widetilde{X}$};

    % ARROWS
    \draw[-latex] (x) -- (z);
    \draw[-latex] (z) -- (x_tilde);
    \draw[-latex] (x) to [looseness=.9, out=45, in=180] (1,.5) to [looseness=.9, in=135, out=0] (x_tilde);
\end{tikzpicture} fixed, we discuss each other DAG that respects \cref{eq:CIT} or \cref{eq:CIO}. 

In particular, for a DAG to be a valid description for missingness in CATE, they {\it should contain}:\\
\begin{tikzpicture}[
    roundnode/.style={circle, draw=black,  minimum size=5mm, inner sep=0},
    baseline=-1mm
]
    % NODES
    \node[roundnode] (x) {$X$};
    \node[roundnode] (y) at (1, 0) {$Y$};

    % ARROWS
    \draw[-latex] (x) -- (y);
\end{tikzpicture}, which is a standard arrow in CATE (cfr. \cref{fig:ignorability}). Essentially, the potential outcome $Y(w)$ is the result of a natural process involving the fully observed $X$.\\
\begin{tikzpicture}[
    roundnode/.style={circle, draw=black,  minimum size=5mm, inner sep=0},
    baseline=-1mm
]
    % NODES
    \node[roundnode, fill=blue!15] (x_tilde) {$\widetilde{X}$};
    \node[roundnode] (w) at (1, 0) {$W$};

    % ARROWS
    \draw[-latex] (x_tilde) -- (w);
\end{tikzpicture}, when a clinician determines treatment, they have to make do with what is given to them, i.e. the observed covariate set $\tilde{X}$, which may be partially unobserved due to missingness. This proxy to $X$ is the best a clinician has available to them as the fully observed $X$ is not always available to them.\\
\begin{tikzpicture}[
    roundnode/.style={circle, draw=black,  minimum size=5mm, inner sep=0},
    baseline=-1mm
]
    % NODES
    \node[roundnode, fill=blue!15] (z) {$Z$};
    \node[roundnode] (w) at (1, 0) {$W$};

    % ARROWS
    \draw[-latex] (z) -- (w);
\end{tikzpicture}, throughout our work we consider $Z$ to be different from $\widetilde{X}$, where $\widetilde{X}$ entails the actual {\it values} of the covariate set, $Z$ indicates their presence. If or not a variable is present for a clinician to base their treatment decision on, can have an effect on their eventual decision. Say that it is too risky for a particular treatment when a patient's blood pressure is not observed, then a different treatment option will be chosen (or the variable will be measured before a decision is made). In this setting, the absence of a value has determined treatment, leading $Z$ into $W$.

Continuing our discussion, for a DAG to be valid as a missingness description, they {\it should not contain} the following:\\
\begin{tikzpicture}[
    roundnode/.style={circle, draw=black,  minimum size=5mm, inner sep=0},
    baseline=-1mm
]
    % NODES
    \node[roundnode] (x) {$X$};
    \node[roundnode] (w) at (1, 0) {$W$};

    % ARROWS
    \draw[-latex] (x) -- (w);
\end{tikzpicture}, as, again, $X$ is simply not available for a clinician to base their treatment-decision on.\\
\begin{tikzpicture}[
    roundnode/.style={circle, draw=black,  minimum size=5mm, inner sep=0},
    baseline=-1mm
]
    % NODES
    \node[roundnode, fill=blue!15] (x_tilde) {$\widetilde{X}$};
    \node[roundnode] (y) at (1, 0) {$Y$};

    % ARROWS
    \draw[-latex] (x_tilde) -- (y);
\end{tikzpicture}, as $Y(w)$ is the result of a natural process, it should depend on $X$, not $\widetilde{X}$.\\
\begin{tikzpicture}[
    roundnode/.style={circle, draw=black,  minimum size=5mm, inner sep=0},
    baseline=-1mm
]
    % NODES
    \node[roundnode, fill=blue!15] (z) {$Z$};
    \node[roundnode] (y) at (1, 0) {$Y$};

    % ARROWS
    \draw[-latex] (z) -- (y);
\end{tikzpicture}, similarly, $Y(w)$, should not depend on the missingness $Z$ as different datasets on the same person would register different outcomes when outcomes depend directly on $Z$, which cannot happen.

{\bf Permutations on CIT.} For CIT, there cannot be a direct arrow from $X$ to $W$ (a feat we agree with), as it automatically violates \cref{eq:CIT}. As long as there exists a path between $W$ and $Y$, excluding the connecting SWIG-path \citep{richardson2013single}, we consider the DAG valid. When excluding paths {\it going up} from $W$ or $Y(w)$, the total amount of valid DAGs amounts to $(C^1_3 + C^2_3 + C^3_3) \cdot (C^1_2 + C^2_2) = 21$, where $C_3$ is coming from three potential paths into $Y$ (having three variables, other than $W$ and $Y(w)$), and $C_2$ is coming from two potential paths into $W$ (three variables, excluding $X$). All the CIT DAGs, including the one presented in \citet{mayer2020doubly} are illustrated in \cref{fig:CIT:perm} (the DAGs presented in \citep{mayer2020doubly} correspond to \cref{fig:CIT:perm:original,fig:CIO:perm:original} for CIT and CIO, respectively).  

 \Cref{fig:CIT:perm} is organised as follows: each two rows contain 7 graphs (corresponding to $(C^1_3 + C^2_3 + C^3_3)$) where we vary the arrows going into $Y(w)$; there are three sets of graphs (where a set is two rows), for each set we vary the arrows going into $W$ (where three then corresponds to $(C^1_2 + C^2_2)$).

  Handy with \cref{tab:CIO_and_CIT_invalid}, we find that \cref{fig:CIT:perm:g} is the only valid DAG from all CIT (and later we see all CIO) compatible DAGs. Inspecting \cref{fig:CIT:perm:g} we notice that it corresponds exactly with \cref{fig:mcm}, without the bidirectional arrow between $W$ and $Z$, which is a result of informative versus uninformative missingness. Splitting $Z$ in $\out{Z}$ and $\into{Z}$ results automatically in MCM.

{\bf Permutations on CIO.} For CIO, the definition does not allow an arrow from $X$ to $Y(w)$, similarly to the definition of CIT not allowing an arrow from $X$ to $W$. Contrasting CIT, however, we find this restriction not sensible. As we have already argued above, $X$ is the {\it only} variable \quotes{justified} to influence the potential outcomes, $Y(w)$, directly. Any other variable influencing $Y(w)$ would result in contrasting outcomes over different datasets, which would imply, for example, different tumour sizes for the same person across different datasets. Having CIO and accompanying \cref{eq:CIO}, results in 21 (applying the same calculation as for CIT) non-sensible DAGs, including the DAG presented in \citet{mayer2020doubly}. All these DAGs are listed in \cref{fig:CIO:perm}, and evaluated (like for CIT) in \cref{tab:CIO_and_CIT_invalid}.  

\begin{table*}[h]
    \centering
    \caption{{\bf Validity of CIT and CIO.} There are six criteria we argue missingness in CATE should follow. These criteria are indicated in the column headers as directed dependencies missingness should, or should not, include. Assuming that there are no arrows {\it going up} from $W$ and $Y(w)$ (for example, the potential outcomes $Y(w)$, cannot influence the covariates $X$), we have 42 DAGs that respect \cref{eq:CATE:assum:UDM} and one of \cref{eq:CIT} or \cref{eq:CIO}. Only one of these DAGs (indicated in \colorbox{green!25}{green}), respects each criteria. This DAG (\cref{fig:CIT:perm:g}) is a permutation of CIT, and a version of MCM that does not assume factors, $\out{Z}$ and $\into{Z}$ in $Z$. In the below table, \quotes{\cmark} means presence and \quotes{\xxmark} means absence, when these icons are black absence or presence is positive, when they are gray they are negative.} 
    \label{tab:CIO_and_CIT_invalid}
    % [inline block 0: 22 envs, 31187 chars -> data_tex | \begin{tabularx}{\textwidth}{ll | CCC | CCC}     \toprule...]

        \caption{}
        \label{fig:CIT:perm:u}
    \end{subfigure}%
    
    \caption{{\bf Permutations on CIT.} There are a total of 21 DAGs that respect \cref{eq:CATE:assum:UDM} and \cref{eq:CIT}. From these DAGs, only one --- \Cref{fig:CIT:perm:g} --- is acceptable. Add only the assumption that some elements in $Z$ are informative, and some elements in $Z$ are uninformative, and we automatically arrive at MCM. Note that we exclude edges {\it going up} from $W$ or $Y(w)$, similarly to \citet{richardson2013single}.}
    \label{fig:CIT:perm}
\end{figure}

\begin{figure}[h]
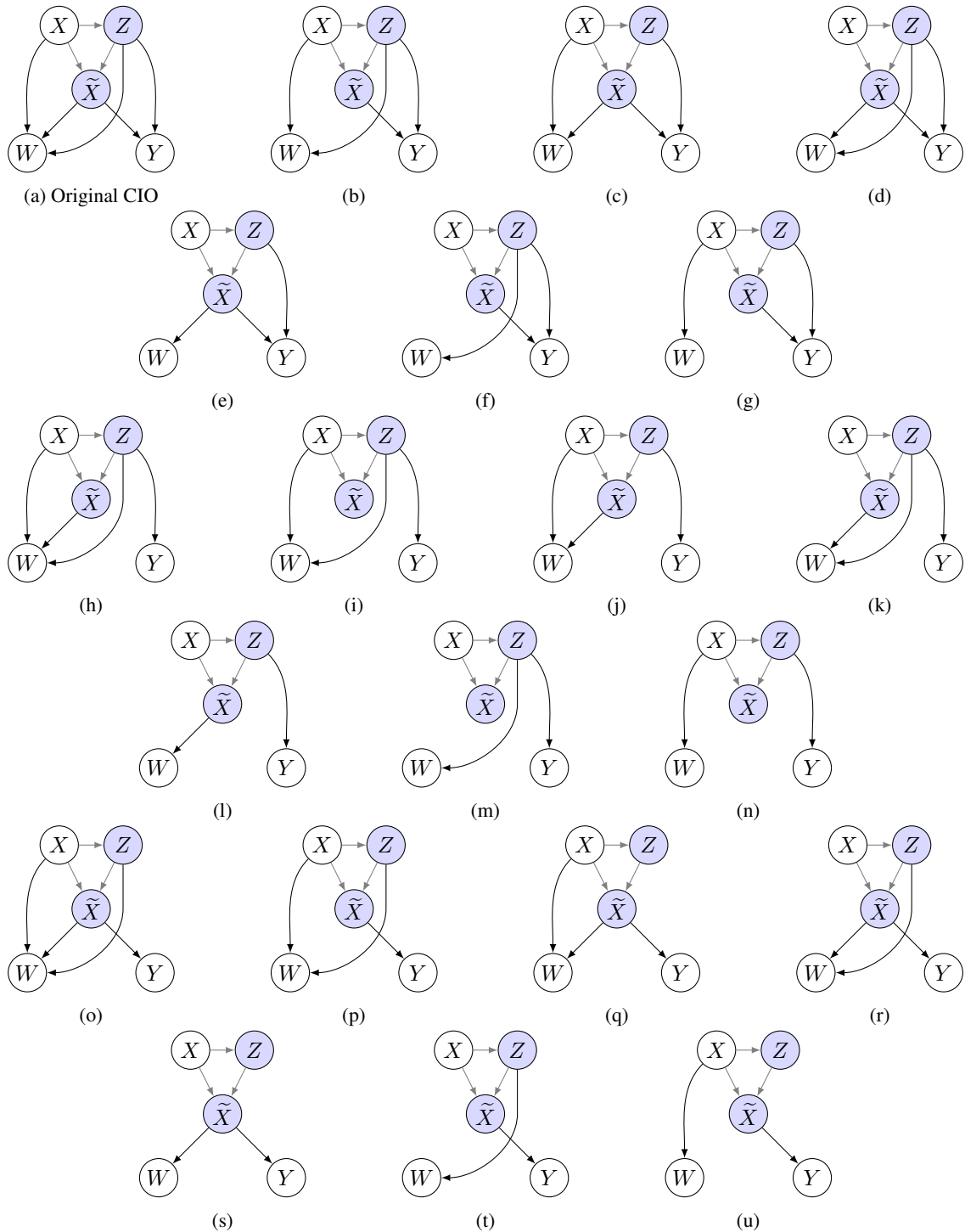

    \centering
    
    \begin{subfigure}{0.23\textwidth}
        \centering
        % [inline block 1: 21 envs, 23949 chars -> data_tex | \begin{tikzpicture}[                 roundnode/.style={circle, draw=black,  minimum size=6mm, inner sep=0}...]

        \caption{}
        \label{fig:CIO:perm:u}
    \end{subfigure}%

    \caption{{\bf Permutations on CIO.} There are 21 possible DAGs that respect \cref{eq:CIO}. None of them is acceptable as they, by definition, cannot include a direct edge from $X$ to $Y(w)$. Having a direct edge between $X$ and $Y(w)$ encodes dependence, despite conditioning on $\widetilde{X}$ and $Z$. Note that we have excluded edges {\it going up} from $W$ or $Y(w)$, in similar fashion to \citet{richardson2013single}.}
    \label{fig:CIO:perm}
\end{figure}

\end{document}